%% file: main.tex
\documentclass{article}

\usepackage[preprint]{neurips_2025}

\usepackage[utf8]{inputenc} 
\usepackage[T1]{fontenc}    
\usepackage{hyperref}       
\usepackage{url}            
\usepackage{booktabs}       
\usepackage{booktabs}
\usepackage{graphicx}       
\usepackage{amsfonts}       
\usepackage{nicefrac}       
\usepackage{microtype}      
\usepackage{xcolor}         

\usepackage{amsmath,amssymb,amsthm}

\newtheorem{theorem}{Theorem}

\newtheorem{proposition}[theorem]{Proposition}

\usepackage{algorithm}
\usepackage{algpseudocode}
\usepackage{enumitem}
\title{CKT-WAM: Parameter-Efficient Context Knowledge Transfer Between World Action Models} 

\author{
  Yuhua Jiang$^{1,2}$, Yijun Guo$^{2}$, Hongbing Yang$^{1,2}$, Guojun Lei$^{2}$, Nuo Chen$^{1}$, Yinuo Zhang$^{1}$, \\ 
  \textbf{Shaoqiang Yan$^{1}$, Bo Lin$^{1}$, Feifei Gao$^{1}$, Biqing Qi$^{3}$} \\
  $^{1}$ Tsinghua University \quad 
  $^{2}$ LivsynRobotics  \quad 
  $^{3}$ Shanghai AI Laboratory
}

\begin{document}

\maketitle

\begin{abstract}
World action models (WAMs) provide a powerful generative framework for embodied control, yet transferring knowledge across heterogeneous WAMs remains challenging due to mismatched latent interfaces, high adaptation cost, and the rigidity of conventional distillation objectives. We propose \textbf{CKT-WAM}, a parameter-efficient \textbf{C}ontext \textbf{K}nowledge \textbf{T}ransfer framework that transfers teacher WAM's knowledge into a student WAM through a compact context in the text embedding space, rather than output imitation or dense hidden-state matching.
Specifically, CKT-WAM extracts intermediate teacher hidden states, reduces the number of tokens via compressors' learnable-query cross attention (LQCA), and transforms them through an always-on generalized adapter, a lightweight router, and sparsely activated specialized adapters.
The resulting context is then appended to the student's conditioning textual embeddings, thereby injecting the transferred knowledge into the student with minimal architectural modification.
Experiments show that CKT-WAM consistently improves zero-shot generalization and achieves the best overall performance on LIBERO-Plus, reaching 86.1\% total success rate with only 1.17\% trainable parameters, while approaching full fine-tuning performance. Beyond simulation, CKT-WAM also demonstrates strong real-world long-horizon manipulation ability, achieving the best average success rate of 83.3\% across four multi-step and long-horizon tasks.
Code is available at \url{https://github.com/YuhuaJiang2002/CKT-WAM}.

\end{abstract}

\section{Introduction}
\label{sec:intro}

World action models (WAMs) have emerged as a powerful paradigm for embodied intelligence, explicitly modeling how visual observations evolve under action to support purposeful behavior \citep{ye2026world}. 
Unlike standard vision-language-action (VLA) models \citep{brohan2023rt1,zitkovich2023rt2,asyncVLA}, which primarily learn a direct, reactive mapping from visual and linguistic inputs to robotic actions, WAMs integrate a generative understanding of the physical world into the control loop. 
Building upon early latent dynamics research \citep{ha2018worldmodels,hafner2020dreamer}, recent literature has formalized two dominant WAM architectures: \emph{imagine-then-execute} models that causally generate future visual trajectories before predicting actions \citep{li2026causal,feng2025vidar}, and \emph{joint-modeling} frameworks that denoise future video and action tokens within a shared generative process \citep{bi2025motus,zhu2025unified}. 
While these developments provide a holistic scaffolding for downstream decision-making, their explicit video synthesis often incurs substantial test-time latency. 
This bottleneck has prompted parallel efforts to bypass explicit video decoding during inference---such as VPP \citep{hu2024video}, UVA \citep{li2025unified}, and Fast-WAM \citep{yuan2026fastwam}, which decouples video co-training from test-time future imagination to achieve real-time efficiency. 
This tension between generative capability and deployment efficiency naturally raises a critical question: \textbf{rather than scaling a single model in isolation, can we efficiently transfer action-relevant world knowledge between different WAMs?}

This question is of immense practical importance. 
In real-world deployments, practitioners often have access to a powerful but computationally heavy teacher WAM and a lightweight student WAM bounded by strict latency or memory budgets. 
However, transferring knowledge across generative WAMs is non-trivial. 
While knowledge distillation has seen immense success in vision and language models \citep{hinton2015distilling,touvron2021training}, applying it across heterogeneous action architectures presents unique challenges. Direct fine-tuning is prohibitively expensive. Output-level imitation, such as policy distillation or logit matching \citep{rusu2016policy}, can be brittle when the teacher and student utilize disparate latent parameterizations or asymmetric action heads. 
Conversely, deep structural alignment via full hidden-state or attention-map matching \citep{romero2015fitnets,zagoruyko2017attention,wang2020minilm} imposes rigid architectural constraints and introduces severe overhead by forcing layer-by-layer mimicry \citep{jiao2020tinybert}. 
What is needed is a transfer mechanism that is both \emph{architecture-compatible} and \emph{parameter-efficient}.

In this paper, we propose \textbf{CKT-WAM}, a parameter-efficient \emph{context knowledge transfer} framework between WAMs. Our key idea is that knowledge need not be transferred through logits, actions, or full hidden-state supervision; instead, a teacher WAM can provide a compact \emph{context interface} that is directly consumable by the student's native generative pipeline. Given an observation and instruction, CKT-WAM extracts hidden states from an intermediate DiT layer of a teacher WAM. We propose compressors that reduce the token count via learnable-query cross attention (LQCA), and then transform the compressed teacher states through an always-on generalized adapter, a lightweight router, and sparsely activated specialized adapters. The resulting routed context tokens preserve task-relevant sufficient statistics while avoiding the cost of transferring full representations, and are injected into the conditioning textual embeddings of the student WAM with minimal architectural modification.
Experiments show that CKT-WAM consistently improves zero-shot generalization and achieves the best overall performance on LIBERO-Plus, reaching 86.1\% total success rate with only 1.17\% trainable parameters, while approaching full fine-tuning performance. Beyond simulation, CKT-WAM also demonstrates strong real-world long-horizon manipulation ability, achieving the best average success rate of 83.3\% across four multi-step tasks.


\section{Related Work}
\label{sec:related_work}

\paragraph{WAMs.}
World models were first developed for latent dynamics learning in prediction, planning, and control \citep{ha2018worldmodels,hafner2019planet,hafner2020dreamer,hafner2023dreamerv3}. 
Recent work extends them to embodied control, where WAMs model future visual dynamics under actions \citep{ye2026world}. 
Existing WAMs mainly follow two directions: \emph{imagine-then-execute} \citep{li2026causal,feng2025vidar} and \emph{joint modeling} \citep{bi2025motus,zhu2025unified}. 
Our work instead studies efficient knowledge transfer from a stronger teacher WAM to a lightweight student WAM.

\paragraph{Knowledge Transfer.}
Knowledge transfer has evolved from distillation to intermediate feature and attention transfer \citep{hinton2015distilling,romero2015fitnets,zagoruyko2017attention,jiao2020tinybert}, and has also been explored in policy learning \citep{rusu2016policy}. 
However, most methods assume aligned outputs or representations, which is less suitable for WAM-to-WAM transfer. 
CKT-WAM instead transfers compact intermediate context tokens through the student's native conditioning pathway, enabling flexible transfer across heterogeneous WAM pairs.

\paragraph{Parameter-Efficient Adapters.}
Parameter-efficient fine-tuning (PEFT) provides an efficient alternative to full fine-tuning. 
LoRA learns low-rank task updates \citep{hu2022lora}, while spectral variants such as PiSSA and KaSA highlight the role of initialization and singular-component selection \citep{meng2024pissa,wang2025kasa}. 
MoE-style PEFT further improves capacity through routed experts \citep{liu2024adamole,tian2024hydralora}. 

\section{Method}
\label{sec:method}

\begin{figure*}[t]
    \centering
    \includegraphics[width=\textwidth]{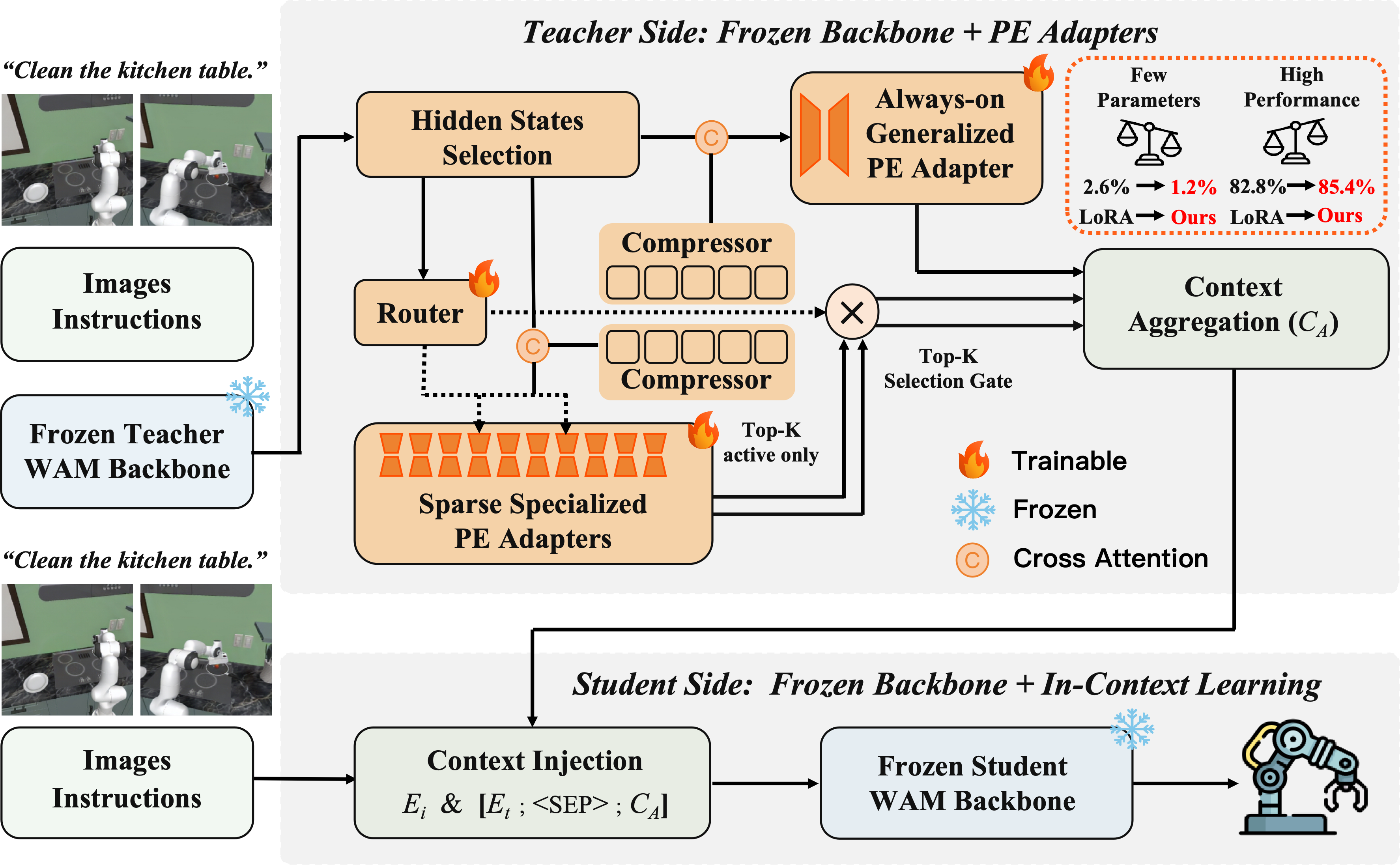}
\caption{Overview of \textbf{CKT-WAM}. Hidden states from a selected intermediate layer of the teacher WAM are transformed by the CKT module. 
The compressors use learnable-query cross-attention (LQCA) for token compression.
A routing module aggregates the relevant branches into a compact context $C_A$, which the student WAM concatenates with its original textual embeddings $E_t$. 
The input image embeddings are denoted by $E_i$, and PE is short for parameter-efficient.}
    \label{fig:ckt_wam}
\end{figure*}

\subsection{Overview}
We propose {CKT-WAM}, a parameter-efficient context knowledge transfer framework between a teacher WAM and a student WAM. 
Rather than transferring full representations, CKT-WAM transfers task-relevant sufficient knowledge through two stages: teacher-side context construction and student-side context injection. 
The teacher produces an action-relevant intermediate representation, which is compressed by LQCA into a fixed-length context $C_A$ and processed by an always-on generalized adapter, a lightweight router, and sparsely activated specialized adapters. 
The resulting context is appended to the student's conditioning textual tokens $E_t$, injecting teacher knowledge while leaving the student denoising backbone unchanged. 
Both backbones are frozen, and only the lightweight CKT module is optimized.

\subsection{Teacher-Side Hidden States Selection for Efficient and Single-Pass Extraction}
Suppose the teacher WAM contains $L^{\mathrm{tea}}$ transformer blocks, and let
$H^{\mathrm{tea}}_{\ell} \in \mathbb{R}^{B \times N \times d^{\mathrm{tea}}}$
denote the hidden states after the $\ell$-th block, where $B$ is the batch size,
$N$ is the sequence length, and $d^{\mathrm{tea}}$ is the teacher hidden dimension.
We extract transferable knowledge from an intermediate block $\ell^* \in \{1,\dots,L^{\mathrm{tea}}\}$.
Using an intermediate teacher representation offers a better cost--utility trade-off than relying on the deepest layer, since it is already semantically structured and action-aware while avoiding the most expensive final-stage features.

To reduce both latency and memory overhead, we do not use the teacher WAM to explicitly predict future video or future actions during inference.
Instead, the teacher is repurposed as a \emph{single-pass observation encoder}:
its input contains only the observed image tokens, and its hidden states are used solely for one-shot context construction.
Let $X_{\mathrm{img}}$ and $X_{\mathrm{text}}$ denote the teacher-side visual and textual tokens, respectively.
We execute the teacher DiT once and obtain the intermediate hidden states of the teacher as: 
\begin{equation}
H_T = H^{\mathrm{tea}}_{\ell^*}(X_{\mathrm{img}},X_{\mathrm{text}}; t^* = 0),
\label{eq:teacher_once}
\end{equation}
where the timestep is fixed to $t^* = 0$.
The resulting hidden states are transformed into $C_A$ once and then reused across the entire denoising trajectory of the student WAM.
Hence, the student no longer requires teacher forwarding at every denoising step.
We set $t^* = 0$ because the teacher is not used to model a future denoising trajectory, but only to encode the current scene into a stable world-aware representation. 
Using the clean endpoint avoids injecting timestep-specific noise corruption into teacher features and yields representations that are more deterministic and easier to reuse across all student timesteps.
This is also consistent with recent one-shot teacher-querying strategies for efficient world-model-assisted action generation.

\subsection{Parameter-Efficient CKT Module}

\paragraph{Compressor: learnable-query cross-attention (LQCA) for token compression.}  
We first map $H_T$ into the student feature space through a shared bottleneck adapter:
\begin{equation}
Z
=
\mathrm{Drop}\left(
\mathrm{LN}\left(
\phi( H_T W_{\mathrm{down}} ) W_{\mathrm{up}}
\right)
\right)
\in
\mathbb{R}^{B \times N \times d^{\mathrm{stu}}},
\label{eq:shared_proj}
\end{equation}
where
$W_{\mathrm{down}} \in \mathbb{R}^{d^{\mathrm{tea}} \times d_b}$ and
$W_{\mathrm{up}} \in \mathbb{R}^{d_b \times d^{\mathrm{stu}}}$ are the down- and up-projection matrices,
$d^\mathrm{stu}$ is the dimension of the student's textual embeddings, 
$d_b$ is the bottleneck dimension,
$\phi(\cdot)$ is a nonlinear activation function,
$\mathrm{LN}(\cdot)$ denotes layer normalization,
and $\mathrm{Drop}(\cdot)$ denotes dropout.

The core role of the {compressor} is to reduce the teacher sequence from $N$ tokens to a much smaller set of transferable tokens through LQCA.
To this end, we introduce two learnable query banks,
$Q_g \in \mathbb{R}^{1 \times K_g \times d^{\mathrm{stu}}}$ and
$Q_s \in \mathbb{R}^{1 \times K_s \times d^{\mathrm{stu}}}$,
for the generalized and specialized branches, respectively, where typically
$K_g, K_s \ll N$.
Let $\tilde{Q}_g \in \mathbb{R}^{B \times K_g \times d^{\mathrm{stu}}}$ and
$\tilde{Q}_s \in \mathbb{R}^{B \times K_s \times d^{\mathrm{stu}}}$ be their batch-wise broadcast.
We derive keys and values from $Z$ as
$K_Z = Z W_K,
V_Z = Z W_V,$
where
$W_K, W_V \in \mathbb{R}^{d^{\mathrm{stu}} \times d^{\mathrm{stu}}}$ are learnable projections.
The compressed teacher contexts are then obtained by
\begin{align}
C_*^{0}
&=
\mathrm{LN}\!\left(
\mathrm{MHCA}(\tilde{Q}_*, K_Z, V_Z) + \tilde{Q}_*
\right)
\in
\mathbb{R}^{B \times K_* \times d^{\mathrm{stu}}}, \quad * \in \{g,s\},
\label{eq:compact_context}
\end{align}
where $*=g$ and $*=s$ correspond to the generalized and specialized branches, respectively, and $\mathrm{MHCA}(Q,K,V)$ denotes multi-head cross-attention.

This design can be viewed as a token-level compression operator: instead of passing the full teacher sequence of length $N$ to subsequent adapters, the compressor summarizes it into only $K_g$ or $K_s$ learnable query-conditioned tokens.
As a result, the module not only allows the generalized and specialized branches to extract different aspects of the teacher representation, but also substantially reduces the downstream transfer cost by replacing long teacher sequences with compact contexts.
The compressor enforces a bottleneck that extracts only task-relevant information from teacher representations.

\paragraph{Always-on generalized adapter.}
On top of $C_g^{0}$, we apply one always-on generalized residual bottleneck adapter:
\begin{equation}
C_g
=
C_g^{0}
+
\phi\big(
C_g^{0} W_{\mathrm{down}}^{(g)} 
\big) W_{\mathrm{up}}^{(g)}
\in
\mathbb{R}^{B \times K_g \times d^{\mathrm{stu}}},
\label{eq:cg_sparse}
\end{equation}
where
$W_{\mathrm{down}}^{(g)} \in \mathbb{R}^{d^{\mathrm{stu}} \times r_g}$,
$W_{\mathrm{up}}^{(g)} \in \mathbb{R}^{r_g \times d^{\mathrm{stu}}}$,
and $r_g \ll d^{\mathrm{stu}}$ is the bottleneck dimension.
Thus, the generalized branch is itself parameter-efficient and remains active for every input, capturing task-agnostic transferable structure.

\paragraph{Routed specialized adapters.}
To model input-dependent transfer patterns, we introduce $M$ specialized adapters
$\{\mathcal{E}_m\}_{m=1}^{M}$.
Each adapter is parameterized as a lightweight residual bottleneck operating on the specialized compact context:
\begin{equation}
\mathcal{E}_m(C_s^{0})
= C_s^{0}
+
\phi\big(
C_s^{0} W^{(m)}_{\mathrm{down}} 
\big) W^{(m)}_{\mathrm{up}} ,
\qquad
m=1,\dots,M,
\label{eq:adapter}
\end{equation}
where
$W^{(m)}_{\mathrm{down}} \in \mathbb{R}^{d^{\mathrm{stu}} \times r_s}$,
$W^{(m)}_{\mathrm{up}} \in \mathbb{R}^{r_s \times d^{\mathrm{stu}}}$,
and $r_s \ll d^{\mathrm{stu}}$ is the bottleneck dimension.
Thus, each specialized branch is also parameter-efficient, while the residual form preserves the base specialized context and adds only a low-rank input-adaptive correction.

\paragraph{Route-first sparse execution.}
Unlike dense multi-adapter aggregation, we first compute routing scores and execute only the selected adapters. Specifically, we summarize the teacher hidden states by average pooling, $h_r=\frac{1}{N}\sum_{i=1}^{N} H_T[:,i,:] \in \mathbb{R}^{B \times d^{\mathrm{tea}}}$, and feed it into a lightweight router to obtain routing probabilities $p=\mathrm{softmax}(\phi(h_r W_1)W_2) \in \mathbb{R}^{B \times M}$, where $H_T\in \mathbb{R}^{B \times N \times d^{\mathrm{tea}}}$, $W_1 \in \mathbb{R}^{d^{\mathrm{tea}} \times d^{\mathrm{gate}}}$, and $W_2 \in \mathbb{R}^{d^{\mathrm{gate}} \times M}$.
For each instance $b$, we select the top-$k$ adapters $\mathcal{I}_b=\mathrm{TopK}(p_b,k)$ and renormalize their routing weights as $\bar{p}_{b,m}=p_{b,m}/\sum_{j\in\mathcal{I}_b}p_{b,j}$ for $m\in\mathcal{I}_b$. The final specialized context is then computed by weighted sparse aggregation:
\begin{equation}
C_s[b,:,:]=\sum_{m\in\mathcal{I}_b}\bar{p}_{b,m}\,\mathcal{E}_m(C_s^{0}[b,:,:]) \in \mathbb{R}^{K_s \times d^{\mathrm{stu}}}.
\label{eq:cs_sparse}
\end{equation}
Stacking all instances yields $C_s \in \mathbb{R}^{B \times K_s \times d^{\mathrm{stu}}}$.

\paragraph{Transferred context and efficiency.}
We form the final transferred context by concatenating the generalized and specialized branches:
\begin{equation}
C_A
=
[\, C_g ; C_s \,]
\in
\mathbb{R}^{B \times (K_g+K_s) \times d^{\mathrm{stu}}}.
\label{eq:CA_sparse}
\end{equation}

The parameter efficiency comes from two sources.
First, both WAM backbones are frozen, so only the CKT module is trainable.
Second, the specialized branch is sparse: although we maintain $M$ adapters in total, only $k \ll M$ adapters are active per instance.
Let $P_{\mathrm{sh}}$, $P_g$, $P_e$, and $P_r$ denote the parameter counts of the shared trunk, generalized adapter, one specialized adapter, and the router, respectively.
Then there is 
$P_{\mathrm{train}}
=
P_{\mathrm{sh}} + P_g + M P_e + P_r $ and 
$ 
P_{\mathrm{active}}
=
P_{\mathrm{sh}} + P_g + k P_e + P_r,$
where $P_{\mathrm{active}}$ denotes the parameters involved in one forward pass.

\subsection{Student-Side Textual Context Injection}
\label{sec:student_injection}

Let $E_t \in \mathbb{R}^{B \times L_t \times d^{\mathrm{stu}}}$ denote the
student textual embeddings. We inject the transferred teacher context by
appending the aggregated context $C_A$ to the conditioning sequence:
\begin{equation}
\tilde{E}_t = [\,E_t \,;\, \langle\textsc{sep}\rangle \,;\, C_A\,]
\in \mathbb{R}^{B \times (L_t + K_g + K_s + 1) \times d^{\mathrm{stu}}},
\label{eq:inject_sparse}
\end{equation}
where $\langle\textsc{sep}\rangle$ is a learnable separator token. The
augmented sequence is fed into the frozen student backbone only through multi-head 
cross-attention:
\begin{equation}
Z_\tau^{(\ell+1)} = Z_\tau^{(\ell)} +
\mathrm{MHCA}\!\bigl(
Q=Z_\tau^{(\ell)},\,
K=\tilde{E}_t W_K^{(\ell)},\,
V=\tilde{E}_t W_V^{(\ell)}
\bigr).
\end{equation}
In this way, each future-frame patch receives a content-conditioned residual
from $C_A$ through the same conditioning pathway used during pretraining,
while leaving the denoising backbone unchanged.

Importantly, this design is also geometry-preserving. The student's 3D RoPE
is applied only in visual self-attention and depends solely on the
spatio-temporal indices of visual tokens, while cross-attention uses no
positional encoding on the conditioning side. Therefore, extending the
conditioning sequence from $E_t$ to $\tilde{E}_t$ does not alter the rotary
geometry of the visual stream. More details are provided in
Appendix~\ref{app:student_injection}.

\subsection{Training Objective}
During training, both the teacher WAM and the student WAM are frozen, and only the parameter-efficient CKT module is optimized. This preserves the pretrained dynamics of both models and constrains knowledge transfer to lightweight generalized adapters, a router, and sparsely activated specialized adapters.
We train CKT module with the student's original diffusion-style latent objective over both future-video and future-action latents. Let $v_0 \in \mathbb{R}^{B \times T_v \times d_v}$ be the clean VAE-compressed future-video latent sequence, and let $a_0 \in \mathbb{R}^{B \times T_a \times d_a}$ be the clean future-action sequence. For each training instance $b$, we sample a shared noise level from the base model's log-normal distribution,
$
\ln(\sigma_b)\sim\mathcal{N}(P_{\mathrm{mean}},P_{\mathrm{std}}^2),
$
then draw modality-specific Gaussian noise
$
\epsilon_b^{v},\epsilon_b^{a}\sim\mathcal{N}(0,I).
$
The noisy latents are constructed as
\begin{equation}
v_{n,b}=\mu(v_{0,b},\sigma_b)+\sigma_b\odot\epsilon_b^{v},\qquad
a_{n,b}=\mu(a_{0,b},\sigma_b)+\sigma_b\odot\epsilon_b^{a},
\label{eq:noisy_va}
\end{equation}
where $\mu(\cdot,\cdot)$ is determined by the underlying SDE parameterization. Using the same $\sigma_b$ aligns the corruption level across video and action, while preserving modality-specific perturbations.
Conditioned on $\tilde{E}_t$, the frozen student predicts the clean future action $\hat{a}_0=D_a(a_{n,b};\tilde{E}_t)$ and clean future video $\hat{v}_0=D_v(v_{n,b};\tilde{E}_t)$. The corresponding losses are
\begin{align}
\mathcal{L}_{\mathrm{act}}
=
\frac{1}{B}\sum_{b=1}^{B}\sum_{t=1}^{T_a}
m^{a}_{b,t}\,\omega(\sigma_b)\|\hat{a}_{0,b,t}-a_{0,b,t}\|_2^2, \ \
\mathcal{L}_{\mathrm{vid}}
=
\frac{1}{B}\sum_{b=1}^{B}\sum_{t=1}^{T_v}
m^{v}_{b,t}\,\omega(\sigma_b)\|\hat{v}_{0,b,t}-v_{0,b,t}\|_2^2,
\label{eq:loss_vid}
\end{align}
where $\omega(\sigma_b)$ is the noise-dependent weight, $m^{a}_{b,t}$ and $m^{v}_{b,t}$ are binary masks for valid action and future-video positions, and $T_a$ and $T_v$ denote the action-chunk and future-video lengths.

To avoid routing collapse, we further use an MoE-style load-balancing loss. Let
$
P_m=\frac{1}{B}\sum_{b=1}^{B}p_{b,m}
$
be the batch-averaged routing mass of adapter $m$, and
$
f_m=\frac{1}{kB}\sum_{b=1}^{B}\mathbb{I}\{m\in\mathcal{I}_b\}
$
be its empirical top-$k$ selection frequency, where
$
\mathcal{I}_b=\operatorname{Top\text{-}K}(\{p_{b,m}\}_{m=1}^{M},k).
$
The auxiliary load balancing loss is
$
\mathcal{L}_{\mathrm{bal}}=M\sum_{m=1}^{M}f_mP_m,
$
and the overall objective is
\begin{equation}
\mathcal{L}=\mathcal{L}_{\mathrm{act}}+\lambda_{\mathrm{vid}}\mathcal{L}_{\mathrm{vid}}+\lambda_{\mathrm{bal}}\mathcal{L}_{\mathrm{bal}},
\end{equation}
where $\lambda_{\mathrm{vid}}$ and $\lambda_{\mathrm{bal}}$ control the weights of future-video supervision and load balancing.


\section{Simulation Experiments}

\subsection{Experimental Settings}
\label{exp_settings}

\paragraph{Model setup.}
We use {Cosmos-Policy-2B} as the student WAM and {DreamZero-14B} as the teacher WAM.
The teacher and the student are kept frozen throughout training, while only the proposed CKT modules are optimized.
For knowledge transfer, we use the hidden states from the {$\ell^*=20$}-th layer of the teacher, which has 40 layers in total. This choice provides an intermediate representation that is sufficiently informative for transfer, while avoiding the stronger task-specific bias often present in higher layers.

\paragraph{Training setup.}
All models are trained on 8 NVIDIA A800 GPUs with distributed training in bfloat16.
We use AdamW with $\beta_1=0.9$, $\beta_2=0.95$, weight decay $0.01$, and a cosine learning-rate schedule with 1k warmup steps.
Unless otherwise stated, the learning rate is set to \textbf{$3\times10^{-4}$} for adapter modules.
For the shared projection in Eq.~\eqref{eq:shared_proj}, we use bottleneck dimension $d_b=512$.
For the compression of the teacher WAM's knowledge, we use two learnable query banks of size $K_g=K_s=32$, with 16 attention heads and 0.1 dropout rate.
The compressed teacher context is computed once and reused by all downstream branches.
Unless otherwise stated, we use \textbf{$M=8$} routed branches with top-$2$ selection.
The trainable parameters are only 187.4M (1.17\%) parameters in the CKT module.  
We set $\mu(v_{0,b},\sigma_b)=v_{0,b}$ and $\mu(a_{0,b},\sigma_b)=a_{0,b}$.
The noise level is set as $\ln(\sigma) \sim \mathcal{N}(P_{\mathrm{mean}}, P_{\mathrm{std}}^{2})$, where $P_{\mathrm{mean}} = 1.39$ and $P_{\mathrm{std}} = 1.2$.
We set the loss scaling factors as $\lambda_{\mathrm{vid}}=1$ and $\lambda_{\mathrm{bal}}=0.01$.

\subsection{LIBERO-Plus Results}

\begin{table*}[t]
  \centering
  \caption{LIBERO-Plus benchmark zero-shot performance. Results are shown in success rate (\%).}
  \label{1}
  \resizebox{\textwidth}{!}{%
    \begin{tabular}{l|ccccccc|c}
      \toprule
      Model & Camera & Robot & Language & Light & Background & Noise & Layout & Total \\
      \midrule
      OpenVLA \citep{kim2025openvla}   & 0.8  & 3.5  & 23.0 & 8.1  & 34.8 & 15.2 & 28.5 & 15.6 \\
      WorldVLA \citep{cen2025worldvla}  & 0.1  & 27.9 & 41.6 & 43.7 & 17.1 & 10.9 & 38.0 & 25.0 \\
      NORA \citep{hung2025nora}      & 2.2  & 37.0 & 65.1 & 45.7 & 58.6 & 12.8 & 62.1 & 39.0 \\
      UniVLA \citep{bu2025univla}    & 1.8  & 46.2 & 69.6 & 69.0 & 81.0 & 21.2 & 31.9 & 42.9 \\
      $\pi_0$ \citep{black2024pi0}   & 13.8 & 6.0  & 58.8 & 85.0 & 81.4 & 79.0 & 68.9 & 53.6 \\
      $\pi_0$-Fast \citep{pertsch2025fast} & 65.1 & 21.6 & 61.0 & 73.2 & 73.2 & 74.4 & 68.8 & 61.6 \\
      $\pi_{0.5}$ \citep{physicalintelligence2025pi05} & 75.4 & 77.5 & 85.6 & 96.9 & 94.6 & 89.7 & 85.7 & 85.7 \\
      OpenVLA-OFT \citep{kim2025finetuning} & 56.4 & 31.9 & 79.5 & 88.7 & 93.3 & 75.8 & 74.2 & 69.6 \\
      RIPT-VLA \citep{tan2025interactive} & 55.2 & 31.2 & 77.6 & 88.4 & 91.6 & 73.5 & 74.2 & 68.4 \\
      X-VLA \citep{zheng2025xvla} & 23.4 & \textbf{89.7} & 75.7 & 88.2 & \textbf{96.0} & 62.7 & 71.8 & 71.4 \\
      HoloBrain0-GD \citep{lin2026holobrain0} & 65.5 & 58.2 & 78.7 & 88.1 & 90.3 & 66.9 & 79.5 & 74.0 \\
      VLA-JEPA \citep{sun2026vlajepa} & 64.2 & 67.7 & 88.1 & 91.8 & 93.4 & 65.8 & 83.9 & 77.9 \\
      GE-Act \citep{liao2025genieenvisioner} & 60.7 & 77.0 & 77.4 & 95.8 & 86.0 & 90.9 & 80.2 & 80.3 \\
      VLANeXt \citep{wu2026vlanext}  & 71.6 & 63.4 & 81.4 & 93.7 & 91.8 & 89.5 & 77.7 & 80.1 \\
      ABot-M0 \citep{yang2026abotm0} & 60.4 & 67.9 & 86.4 & 96.2 & 91.6 & 86.4 & 82.6 & 80.5 \\
      Cosmos-Policy \citep{kim2026cosmospolicy} & {75.8} & 63.3 & 81.7 & 96.5 & 88.9 & 92.7 & 82.2 & 82.2 \\
      \textbf{CKT-WAM (Ours)} &  \textbf{77.4} & {71.4} & \textbf{86.7} & \textbf{98.2} & {90.2} & \textbf{94.8} & \textbf{88.5} & \textbf{86.1} \\
      \bottomrule
    \end{tabular}
  }
\end{table*}

We use LIBERO-Plus \citep{libero_plus} as the simulation benchmark.
The student WAM is initialized from the official Cosmos-Policy-LIBERO-Predict2-2B checkpoint.
All models are trained on the combined training sets from all four suites of LIBERO \citep{libero}, and evaluated in the zero-shot setting under seven out-of-distribution dimensions: camera, robot, language, lighting, background, noise, and layout.

As shown in Table~\ref{1}, CKT-WAM achieves the best overall zero-shot performance on LIBERO-Plus. Its advantage is not limited to a single perturbation type: the model attains the best results on camera, language, lighting, layout, and total success rate, while remaining competitive on the other dimensions. In particular, compared with prior VLA and WAM baselines, CKT-WAM shows stronger robustness under semantic variation and scene-level distribution shifts, suggesting that the transferred teacher knowledge improves both instruction grounding and generalization to unseen visual conditions. Although some competing methods are stronger on individual dimensions such as camera, robot, background, or noise, CKT-WAM delivers the most balanced performance across all out-of-domain (OOD) factors, which ultimately leads to the strongest aggregate generalization.

\subsection{Comparison with Parameter Efficient Adaptation Methods}

To isolate the effect of different parameter-efficient adaptation strategies, we compare our method with a set of representative PEFT baselines, including LoRA \citep{hu2022lora}, Mona \citep{yin2025mona}, MiLoRA \citep{wang-etal-2025-milora}, PiSSA \citep{meng2024pissa}, DoRA \citep{liu2024dora}, GOAT \citep{fan2025goat}, and AdaMoLE \citep{liu2024adamole}.
For all non-ours baselines, the hidden states extracted from the teacher WAM's $\ell^*$-th layer are first projected by a two-layer MLP and then concatenated with the student WAM's textual input along the token dimension.
When training baselines, the first $\ell^*$-th layers of teacher WAM 
and the whole student WAM are adapted using the corresponding PEFT method.
This setting keeps the teacher-student interaction interface fixed across baselines, so that the comparison mainly reflects the effectiveness of the adaptation mechanism itself.

As shown in Table~\ref{tab:peft_compare}, CKT-WAM achieves the best overall performance among all PEFT baselines while using the smallest fraction of trainable parameters.
Specifically, it improves the total success rate from 82.8\% to 86.1\% compared with LoRA, while reducing the trainable parameter ratio from 2.59\% to 1.17\%.
It also consistently outperforms stronger recent PEFT methods such as AdaMoLE and GOAT, indicating that the gain does not come from increased adaptation capacity, but from a more effective mechanism for transferring teacher knowledge into the student WAM.
The advantage of CKT-WAM is particularly clear under \emph{robot}, \emph{light}, and \emph{layout} shifts, where it achieves the best performance, and it remains highly competitive on the other dimensions.
Compared with full fine-tuning, our method attains a nearly identical overall success rate (86.1\% vs.\ 86.3\%) using only 1.17\% trainable parameters, while even surpassing it on \emph{robot}, \emph{light}, and \emph{layout}.
Although full fine-tuning remains slightly stronger on \emph{language}, \emph{background}, and \emph{noise}, the gap in overall performance is marginal, suggesting that CKT-WAM captures most of the benefit of full adaptation at a tiny fraction of the optimization cost.
Overall, these results demonstrate that CKT-WAM provides a substantially better trade-off between adaptation efficiency and zero-shot generalization than existing PEFT alternatives, while approaching the performance of full fine-tuning.

\begin{table*}[t]
  \centering
  \caption{Comparison with parameter-efficient adaptation methods on LIBERO-Plus.
  The column \# Parameters denotes the ratio of trainable parameters to the overall parameters.}
  \label{tab:peft_compare}
  \resizebox{\textwidth}{!}{%
    \begin{tabular}{l|c|ccccccc|c}
      \toprule
      Model & \# Parameters & Camera & Robot & Language & Light & Background & Noise & Layout & Total \\
      \midrule
      Full Fine-tuning & {100\%} & {77.6} & {71.2} & \textbf{87.5} & {97.1} & \textbf{91.0} & \textbf{95.7} & {88.3} & \textbf{86.3} \\
      \midrule 
LoRA \citep{hu2022lora}                  & 2.59\% & 76.1 & 67.9 & 81.6 & 95.9 & 88.1 & 90.4 & 84.7 & 82.8 \\
PiSSA \citep{meng2024pissa}             & 2.04\% & 75.2 & 68.5 & 84.2 & 96.1 & 88.4 & 90.6 & 85.1 & 83.3 \\
DoRA \citep{liu2024dora}                & 2.06\% & 76.4 & 67.8 & 84.7 & 96.4 & 88.7 & 90.8 & 85.4 & 83.6 \\
MiLoRA \citep{wang-etal-2025-milora}    & 1.73\% & 73.9 & 71.2 & 85.3 & 96.8 & 89.0 & 91.0 & 85.9 & 84.0 \\
Mona \citep{yin2025mona}            & 1.65\% & \textbf{79.2} & 70.6 & 84.1 & 97.0 & 89.3 & 91.1 & 86.2 & 84.7 \\
AdaMoLE \citep{liu2024adamole}          & 1.69\% & 76.0 & 70.9 & 85.4 & 97.3 & 89.6 & 91.3 & 86.7 & 84.6 \\
GOAT \citep{fan2025goat}                & 1.77\% & 75.0 & 71.0 & 86.1 & 97.7 & 89.9 & 91.6 & 87.4 & 84.8 \\
\textbf{CKT-WAM (Ours)} & \textbf{1.17\%} & {77.4} & \textbf{71.4} & {86.7} & \textbf{98.2} & {90.2} & {94.8} & \textbf{88.5} & {86.1} \\
      \bottomrule
    \end{tabular}
  }
\end{table*}

\subsection{Ablation Study}

To understand the contribution of each component in CKT-WAM, we perform ablations on LIBERO-Plus under the same zero-shot evaluation protocol. Specifically, 
``w.o. Generalized Adapter'' removes the generalized adapter branch and keeps only the specialized adapter, 
``w.o. Specialized Adapters'' removes the specialized adapter branch and retains only the generalized adapters, 
and ``w.o. Auxiliary Loss'' drops the auxiliary regularization while keeping the full architecture unchanged.

\begin{table*}[t]
  \centering
  \caption{Ablation study on LIBERO-Plus benchmark zero-shot performance. Results are shown in success rate (\%).}
  \label{tab:ablation_libero_plus}
  \resizebox{\textwidth}{!}{%
    \begin{tabular}{l|ccccccc|c}
      \toprule
      Model & Camera & Robot & Language & Light & Background & Noise & Layout & Total \\
      \midrule
      w.o. Context & {75.8} & 63.3 & 81.7 & 96.5 & 88.9 & 92.7 & 82.2 & 82.2 \\      
w.o. Generalized Adapter    & 73.5 & 70.8 & 82.4 & 96.7 & 86.1 & 89.4 & 86.9 & 83.0 \\
w.o. Specialized Adapters   & 72.8 & 68.1 & 82.3 & 97.0 & 88.7 & 90.6 & 85.7 & 82.7 \\
w.o. Auxiliary Loss         & 75.0 & {70.9} & 83.6 & 97.6 & 86.9 & 92.2 & 86.9 & 84.1 \\
\midrule
\textbf{CKT-WAM (Ours)}     & \textbf{77.4} & \textbf{71.4} & \textbf{86.7} & \textbf{98.2} & \textbf{90.2} & \textbf{94.8} & \textbf{88.5} & \textbf{86.1} \\
      \bottomrule
    \end{tabular}
  }
\end{table*}

As shown in Table~\ref{tab:ablation_libero_plus}, all three components of CKT-WAM contribute to the final zero-shot generalization performance, while their roles are not identical. Removing the specialized adapters causes the largest overall degradation, which suggests that instance-dependent specialization is particularly important for handling the diverse distribution shifts in LIBERO-Plus. Removing the generalized adapter also leads to a clear drop, indicating that the shared transfer branch provides a strong and stable foundation for cross-task knowledge injection. In contrast, removing the auxiliary loss yields the smallest degradation. This suggests that the auxiliary loss mainly acts as a regularizer that improves routing stability and adapter coordination, while the principal gains come from the joint use of generalized and specialized transfer pathways. Overall, these results validate that the full CKT-WAM design is necessary to obtain the strongest aggregate performance and the most balanced robustness across diverse OOD shifts.


We ablate the teacher intermediate layer $\ell^*$ for context extraction under the LIBERO-Plus zero-shot setting. Figure~\ref{fig:layer_ablation_latency} shows that success rate improves from $82.1\%$ at $\ell^*=0$ to $86.1\%$ at $\ell^*=20$, then saturates, while latency grows nearly linearly from $0.114$ s to $0.587$ s per action chunk. This suggests that intermediate teacher features provide the best balance between semantic usefulness and computational cost. We thus adopt $\ell^*=20$ as the default setting.

\begin{figure}[t]
\centering
\includegraphics[width=0.9\linewidth]{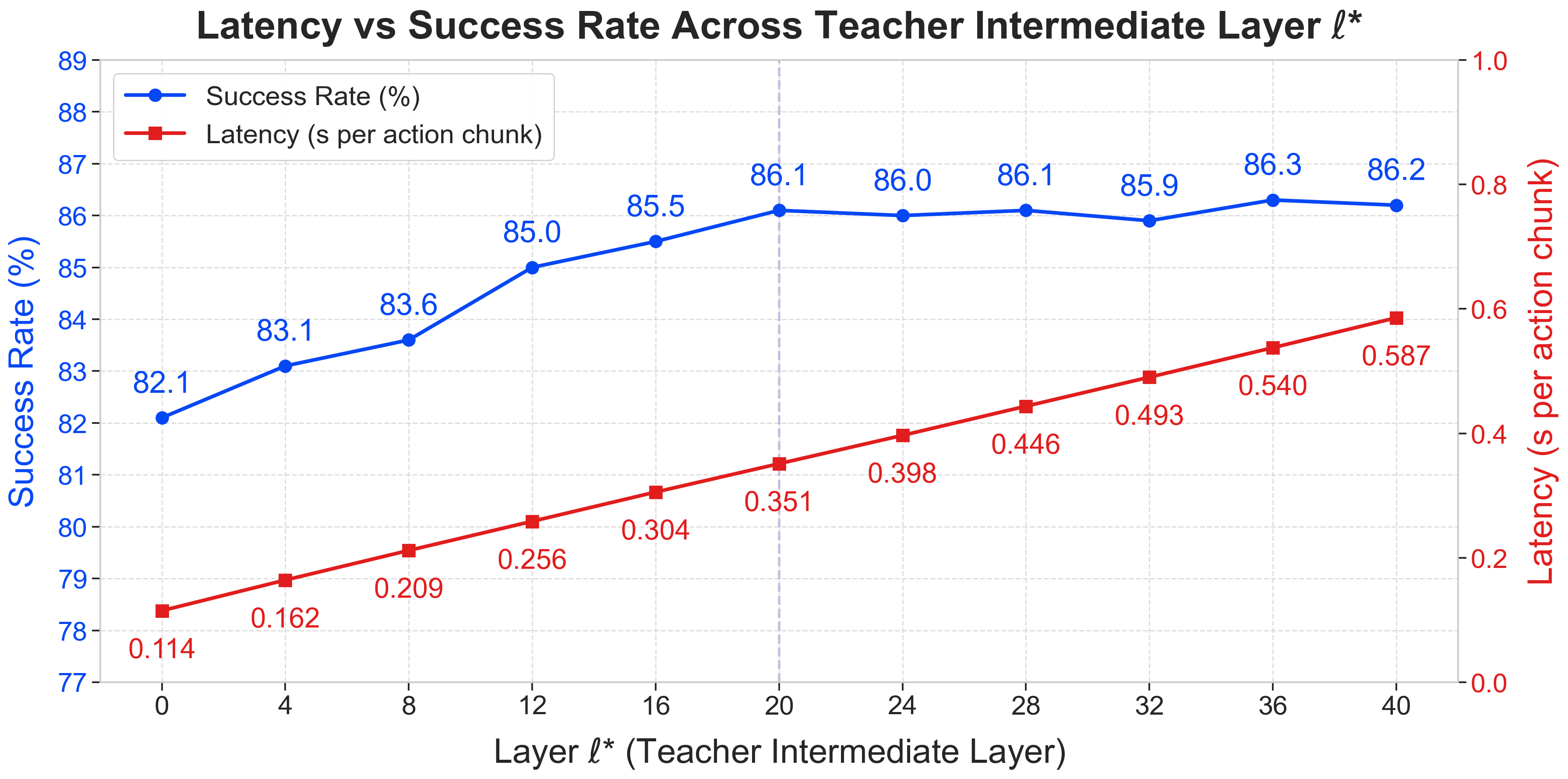}
\caption{
Latency-success trade-off across different teacher intermediate layers $\ell^*$.
The left y-axis reports the zero-shot success rate on LIBERO-Plus, while the right y-axis reports inference latency in seconds per action chunk.
}
\label{fig:layer_ablation_latency}
\end{figure}

\section{Real-World Experiments}
\label{subsec:real_robot_setting}

\begin{figure*}[t]
    \centering
    \includegraphics[width=\textwidth]{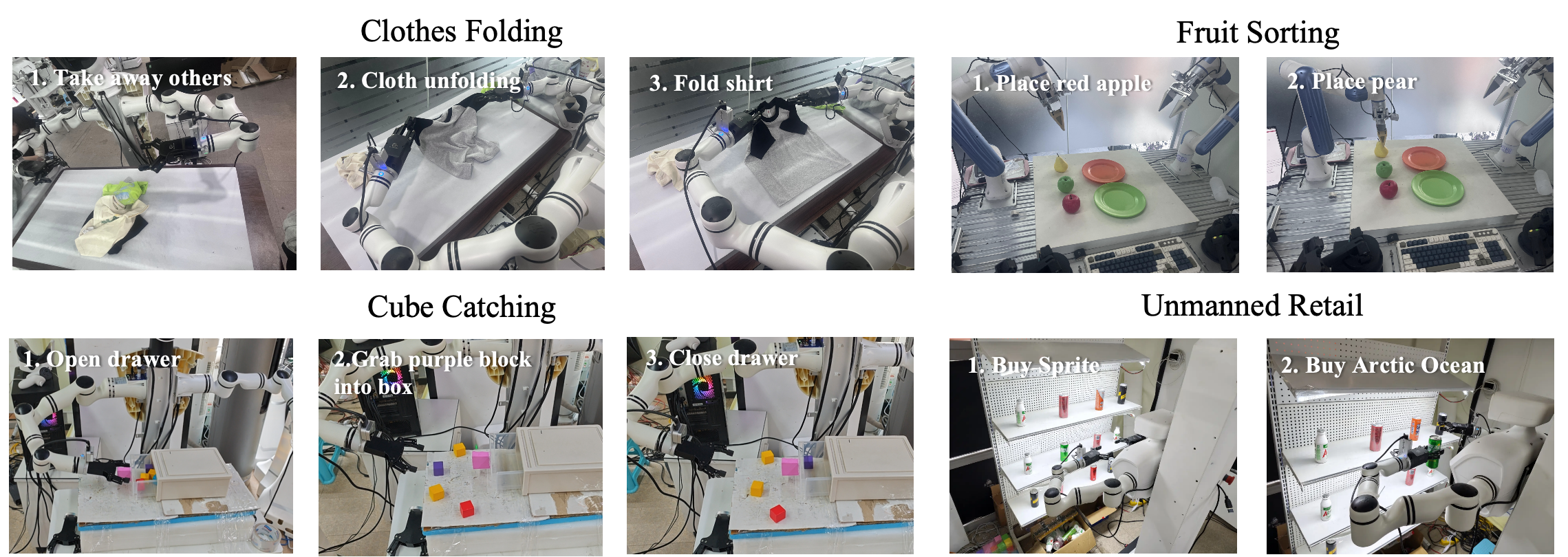}
    \caption{
    Real-world long-horizon tasks in our evaluation.
    From left to right: (\textbf{1}) clothes folding, including removing distracting clothes, unfolding the garment, and folding it;
    (\textbf{2}) fruit sorting, placing the red apple and pear onto the designated plates;
    (\textbf{3}) cube catching, including opening the drawer, placing the purple block into the box, and closing the drawer;
    and (\textbf{4}) unmanned retail, picking target drinks from a cluttered shelf.
    All tasks require completing multiple ordered subgoals.
    }
    \label{fig:real_robot}
\end{figure*}

Our real-world evaluation focuses on \emph{long-horizon, multi-step manipulation} rather than single-step skills. As shown in Figure~\ref{fig:real_robot}, we evaluate four representative tasks: clothes folding, fruit sorting, cube catching, and unmanned retail. All tasks require the policy to complete a sequence of dependent subgoals under continuous visual feedback. For each task, we conduct {45 trials} in the real world and report the task success rate. Among these tasks, \emph{clothes folding} is particularly challenging because it involves \textbf{three consecutive stages}: removing distracting clothes, unfolding the target garment, and folding it. Failure at any intermediate stage can prevent successful task completion, making this setting a practical test of long-horizon real-world control. Detailed task descriptions and manipulation process are shown in Appendix~\ref{real_robot_setting_detailed}.

\begin{table}[t]
\centering
\caption{Real-world success rates (\%) on four long-horizon and multi-step manipulation tasks.}
\label{tab:real_world}
\resizebox{\linewidth}{!}{
\begin{tabular}{l|cccc|c}
\toprule
Method 
& Clothes Folding 
& Fruit Sorting 
& Cube Catching 
& Unmanned Retail 
& Avg. \\
\midrule
$\pi_{0.5}$ \citep{physicalintelligence2025pi05} 
& \textbf{77.8} 
& 84.4 
& 80.0 
& 77.8 
& 80.0 \\
Cosmos Policy \citep{kim2026cosmospolicy} 
& 62.2 
& 77.8 
& 68.9 
& 71.1 
& 70.0 \\
Fast-WAM \citep{yuan2026fastwam} 
& 66.7 
& 75.6 
& 73.3 
& 66.7
& 70.6 \\
\textbf{CKT-WAM (Ours)} 
& 73.3 
& \textbf{88.9} 
& \textbf{86.7} 
& \textbf{84.4} 
& \textbf{83.3} \\
\bottomrule
\end{tabular}
}
\end{table}

As shown in Table~\ref{tab:real_world}, {CKT-WAM} achieves the best overall real-world performance, obtaining the highest average success rate of {83.3\%} across the four long-horizon tasks. Among the baselines, $\pi_{0.5}$ is the strongest competitor with an average success rate of 80.0\%, and it slightly surpasses our method on the most challenging \emph{clothes folding} task. Nevertheless, {CKT-WAM} achieves the best success rate on the other three tasks, namely \emph{fruit sorting}, \emph{cube catching}, and \emph{unmanned retail}. These results suggest that CKT-WAM provides stronger long-horizon planning and multi-stage execution ability in real-world settings, while remaining competitive on highly challenging deformable-object manipulation tasks.

\section{Conclusion}
We presented {CKT-WAM}, a parameter efficient framework for transferring knowledge between world action models through a compact contextual interface. By combining intermediate-layer teacher representations, learnable-query compression, generalized and specialized sparse adapters, and lightweight conditioning-side injection, our method enables effective teacher--student transfer while keeping both backbones frozen. 
Experiments on LIBERO-Plus show that CKT-WAM delivers strong zero-shot generalization, outperforms representative PEFT baselines, and approaches full fine-tuning performance while updating only 1.17\% of the parameters. Beyond simulation, CKT-WAM also demonstrates strong real-world long-horizon manipulation ability, achieving the best average success rate of 83.3\% across four multi-step tasks. Together, these results highlight compact contextual transfer as a simple, efficient, and effective approach to enhancing WAMs.

\bibliographystyle{plainnat} 
\small
\bibliography{main}
\normalsize

\newpage
\appendix

\section{Technical Appendices and Supplementary Material}

\subsection{Statement for Use of LLMs}

LLMs were only used to assist with language polishing in certain sections of this paper.

\subsection{Reproducibility Statement}
We provide open-source code to reproduce all experiments.
To facilitate reproducibility, we additionally include a minimal working example and environment setup instructions, enabling researchers to verify correctness and replicate our reported results with minimal effort.
Code is available at \url{https://github.com/YuhuaJiang2002/CKT-WAM}.

\subsection{Impact Statement}
This paper presents work whose goal is to advance the field of machine learning. We do not identify any specific impacts of this work that require particular emphasis here.

\subsection{Limitations and Future Work}
A limitation of this work is that our study is currently focused on embodied control and WAM-to-WAM transfer, and we do not explore whether the proposed context knowledge transfer mechanism can generalize to broader multimodal settings such as text modeling, visual question answering, or other non-robotic generative tasks. We view this as a scope choice rather than a fundamental restriction of the method, since the core design of CKT-WAM---compressing teacher-side intermediate representations into a compact transferable context and injecting it through a lightweight conditioning interface---is not inherently tied to robotics. An important direction for future work is therefore to investigate whether the same principle can serve as a general parameter-efficient transfer mechanism across heterogeneous foundation models in language-only and vision-language domains, especially in settings where full hidden-state alignment is costly or structurally mismatched.

\section{More Experimental Results}
\subsection{Stage-wise Selection Patterns of Specialized Adapters}
\label{app:stagewise_selection_ckt}

To further examine whether the routed specialized adapters learn meaningful functional roles, we analyze their selection patterns within a single long-horizon manipulation task, \emph{cube catching}. This task is naturally decomposed into four sequential stages: \emph{drawer opening}, \emph{cube grasping}, \emph{cube placing}, and \emph{drawer closing}. We randomly sample 120 episodes from this task and, for each stage, compute the average selection probability of each specialized adapter.

As shown in Figure~\ref{fig:selection_ckt}, the specialized adapters exhibit clear stage-dependent routing behavior even within the same task. Different stages concentrate probability mass on different small subsets of adapters, rather than relying on a fixed task-level activation pattern. Specifically, \emph{drawer opening} mainly activates Adapters 2 and 3, \emph{cube grasping} is dominated by Adapters 5 and 7, \emph{cube placing} concentrates on Adapters 5 and 1, and \emph{drawer closing} shifts the focus primarily to Adapters 6 and 4. Moreover, each stage mainly activates only one or two adapters, while the remaining adapters receive much lower probabilities.

These results suggest that the routed specialized adapters do not behave as static modules assigned to the entire task. Instead, their activation changes dynamically with the current manipulation sub-stage, indicating fine-grained functional specialization within a single trajectory. The consistently sparse probability concentration also supports the effectiveness of our route-first sparse design, where only a small subset of experts is needed to capture the control demands of each stage.

\begin{figure}[t]
    \centering
    \includegraphics[width=\linewidth]{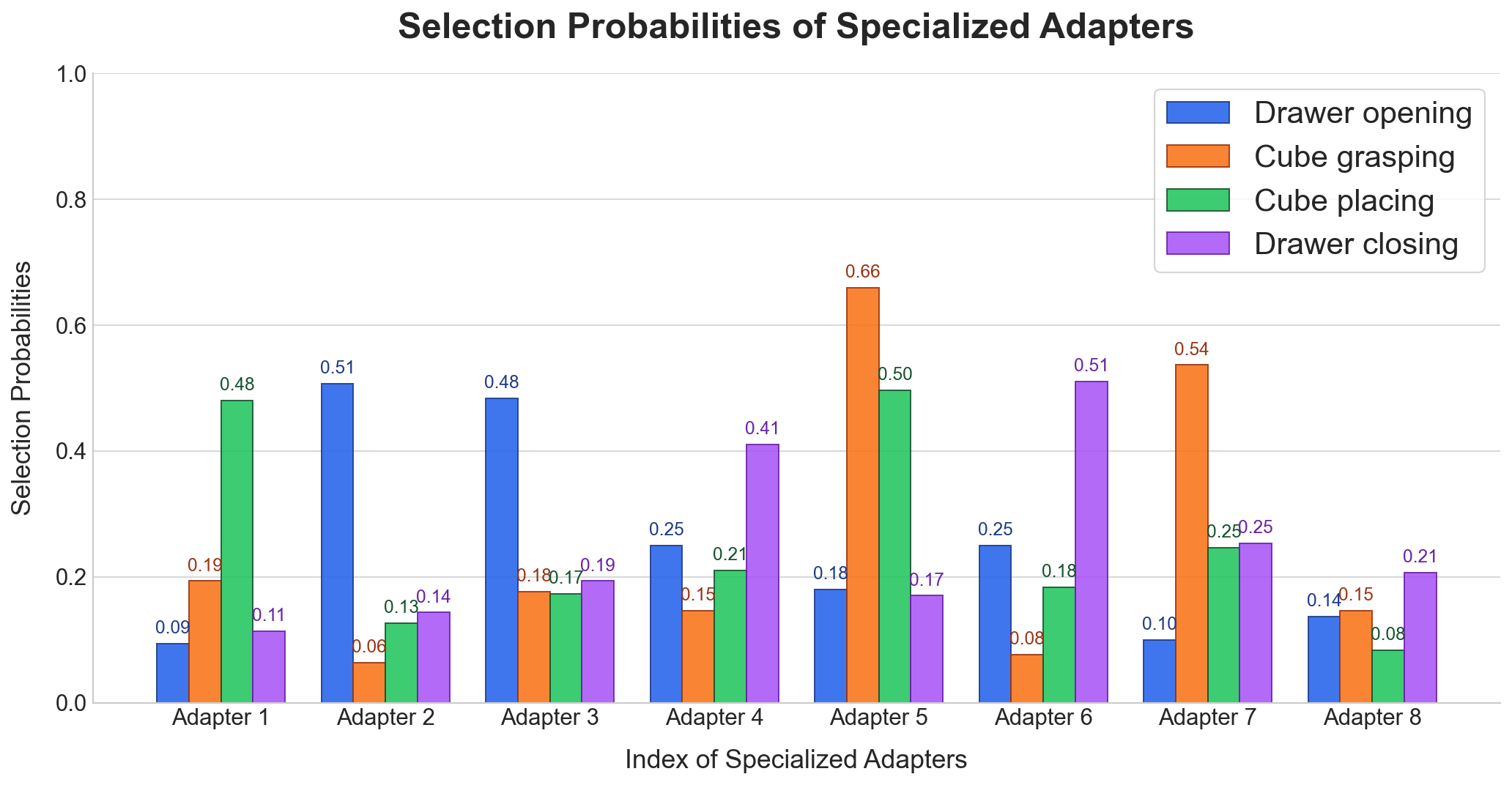}
    \caption{
    Stage-wise selection probabilities of specialized adapters in the \emph{cube catching} task.
    We sample 120 episodes and compute the average adapter selection probabilities for the four stages:
    \emph{drawer opening}, \emph{cube grasping}, \emph{cube placing}, and \emph{drawer closing}.
    Even within the same task, the dominant adapters vary substantially across stages, and each stage mainly activates only one or two experts.
    }
    \label{fig:selection_ckt}
\end{figure}

\subsection{Experimental Settings}
\label{app:exp_settings}

Table~\ref{tab:appendix_exp_settings} summarizes the default experimental settings used throughout this work unless otherwise specified.
Our setup follows a standard large-teacher/smaller-student knowledge transfer paradigm, where a frozen {DreamZero-14B} teacher supervises a {Cosmos-Policy-2B} student.
For transfer, we extract the hidden states from the intermediate teacher layer $\ell^*=20$ out of the 40-layer teacher, which provides a favorable trade-off between semantic richness and transferability.
The compressed teacher context is computed once and reused by all downstream branches, which is also consistent with the efficiency-oriented design of our method.

\begin{table}[t]
\centering
\caption{Default experimental settings used in our method unless otherwise specified.}
\label{tab:appendix_exp_settings}
\resizebox{\linewidth}{!}{%
\begin{tabular}{l|l}
\toprule
\textbf{Category} & \textbf{Setting} \\
\midrule
Student WAM & Cosmos-Policy-2B \\
Teacher WAM & DreamZero-14B \\
Teacher depth & 40 layers \\
Teacher hidden states selection from layer &  $\ell^* = 20$ \\
Precision & \texttt{bfloat16} \\
Hardware & 8 $\times$ NVIDIA A800 GPUs \\
Training mode & Distributed training \\
Optimizer & AdamW \\
AdamW $\beta_1, \beta_2$ & $\beta_1=0.9,\ \beta_2=0.95$ \\
Weight decay & $0.01$ \\
Learning-rate schedule & Linear Warmup and Cosine decay \\
Warmup steps & 1k \\
Learning rate & $3 \times 10^{-4}$ for adapter modules \\
Bottleneck dimension $d_b$ & 512 \\
Query bank size $K_g,K_s$ & $K_g=K_s=32$ \\
Attention heads & 16 \\
Dropout & 0.1 \\
Teacher hidden states extraction & Computed once and reused for all student's timesteps \\
Number of specialized adapters $M$ & 8 \\
Routing strategy & top-$k = 2$ \\
Noise level $\ln(\sigma) \sim \mathcal{N}(P_{\mathrm{mean}}, P_{\mathrm{std}}^{2})$ & $P_{\mathrm{mean}} = 1.39$ and $P_{\mathrm{std}} = 1.2$ \\
Video loss scaling factor $\lambda_{\mathrm{vid}}$ & $\lambda_{\mathrm{vid}}=1$ \\
Load balancing loss scaling factor $\lambda_{\mathrm{bal}}$ & $\lambda_{\mathrm{bal}}=0.01$ \\
\bottomrule
\end{tabular}%
}
\end{table}

Unless otherwise stated, all experiments use the configuration in Table~\ref{tab:appendix_exp_settings}.
In particular, the routing module uses $M=8$ branches with top-$k=2$ selection, while the CKT module adopts a bottleneck projection with $d_b=512$ and a shared query bank of size $K=32$.
These choices provide a good balance between transfer capacity, routing flexibility, and computational efficiency in our experiments.  

\subsection{Detailed Real-World Evaluation Setting}
\label{real_robot_setting_detailed}

Our real-world evaluation is designed to test \emph{long-horizon, multi-step manipulation} rather than isolated one-step behaviors. In all tasks, the robot must complete a sequence of ordered subgoals under continuous visual feedback, and an episode is counted as successful only when all required stages are completed in the correct order. This setting is more challenging than standard short-horizon evaluation because an error in an early stage can propagate and prevent successful completion of the later ones.

\begin{figure*}[t]
    \centering
    \includegraphics[width=\textwidth]{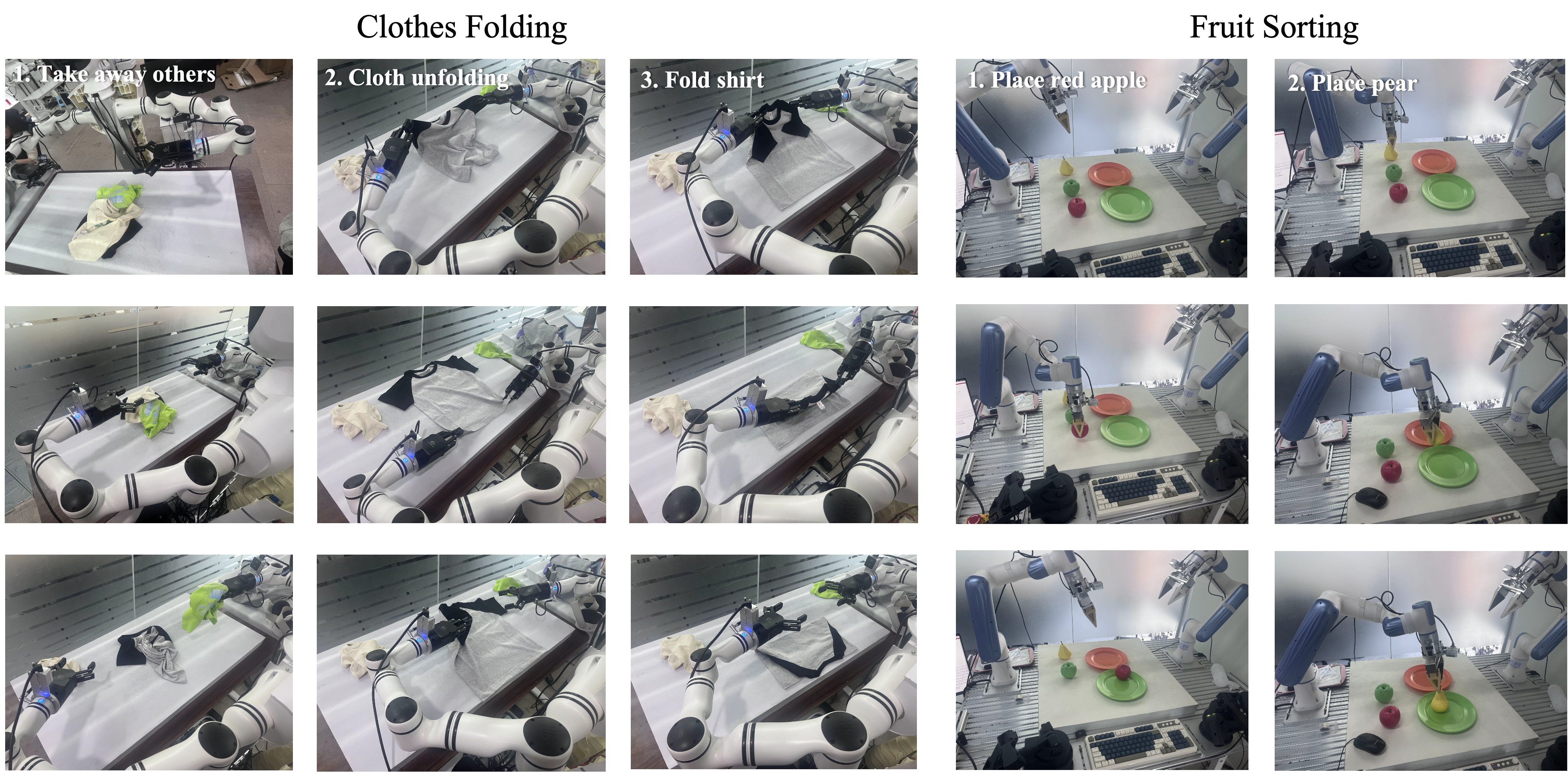}
    \caption{
    Detailed real-world examples for \textbf{Clothes Folding} and \textbf{Fruit Sorting}.
    The left three columns show representative intermediate states of the clothes-folding task, while the right two columns show the fruit-sorting task.
    }
    \label{fig:appendix_real_robot1}
\end{figure*}

As shown in Figure~\ref{fig:appendix_real_robot1}, the \textbf{Clothes Folding} task is a representative long-horizon manipulation problem with three explicitly ordered stages. First, the robot removes distracting clothes from the workspace to expose the target garment and create a clean manipulation area. Second, it unfolds the target shirt from a partially entangled or crumpled configuration into a flatter state that is suitable for precise bimanual interaction. Third, it performs the final folding motion to produce a compact folded shirt. This task is challenging because it requires sequential planning across multiple stages, large pose changes of deformable objects, and stable execution over an extended action horizon.

Figure~\ref{fig:appendix_real_robot1} also illustrates the \textbf{Fruit Sorting} task. In this task, the robot must identify the target fruits, grasp them one by one, and place them onto the designated plates in sequence. The task is not a single pick-and-place action: after placing the first fruit, the policy must preserve the intermediate arrangement, re-orient to the second target, and complete the remaining placement without disturbing previously placed objects. This requires sustained visual grounding and consistent progress across multiple dependent subgoals.

\begin{figure*}[t]
    \centering
    \includegraphics[width=\textwidth]{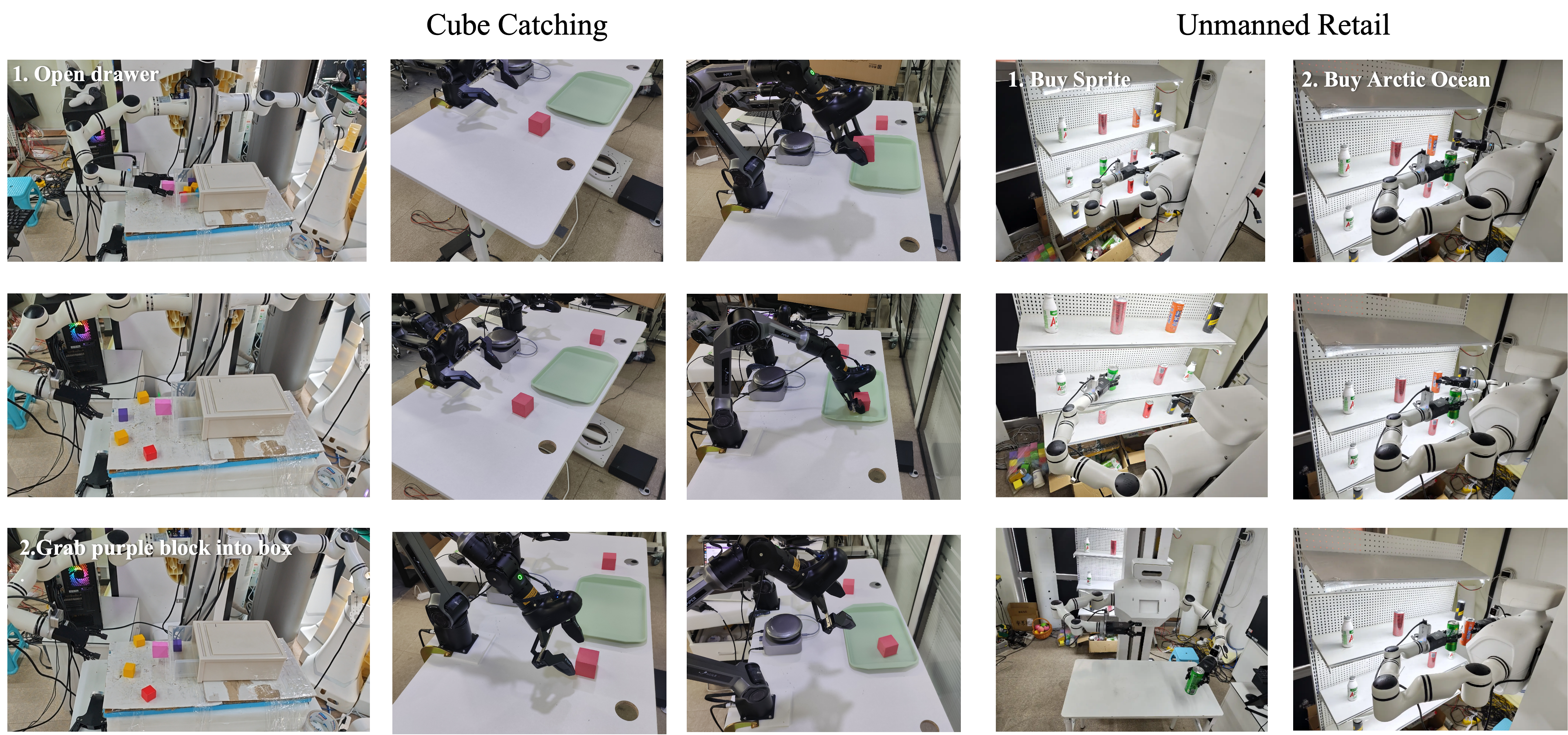}
    \caption{
    Detailed real-world examples for \textbf{Cube Catching} and \textbf{Unmanned Retail}.
    The left part shows representative snapshots of the cube task, and the right part shows the unmanned-retail task with target beverage selection from a cluttered shelf.
    }
    \label{fig:appendix_real_robot2}
\end{figure*}

Figure~\ref{fig:appendix_real_robot2} shows the \textbf{Cube Catching} task. In this task, the robot must first localize the target cube, move the gripper to a suitable pre-grasp pose, establish a stable grasp, and then transport the cube to the designated target area. Although the manipulated object is rigid and smaller than a garment, the task still requires several dependent stages, including accurate perception, grasp alignment, secure lifting, and controlled release. The evaluation therefore tests whether the policy can reliably chain together multiple primitive skills into a complete manipulation sequence.

The same figure further presents the \textbf{Unmanned Retail} task, where the robot interacts with a cluttered shelf containing multiple beverages. Given a target item request, the robot must visually identify the correct product, reach into the shelf without colliding with surrounding objects, grasp the target drink, and retrieve it from the shelf. In multi-item settings, this process is repeated sequentially for different requested beverages. This task is particularly challenging because it jointly requires category-level recognition, motion planning in clutter, stable grasping in constrained spaces, and successful completion of multiple ordered retrieval steps.

\subsection{Trainable Parameter Budget of the CKT Module}
\label{app:param-budget}

In this section we provide a detailed accounting of the parameters
introduced by our sparse CKT module.  All numbers correspond to
the default recipe used in the main experiments: a DreamZero--14B
teacher WAM with hidden dimension $d_{\text{tea}}{=}5120$, a
Cosmos-Policy--2B student WAM with hidden dimension
$d_{\text{stu}}{=}2048$, bottleneck dimension $d_b{=}512$,
$K{=}32$ learnable query tokens, $M{=}8$ specialized experts,
top-$k$ with $k{=}2$, $8$-head cross-attention inside each adapter,
and a router gating hidden size of~$512$.
Both the teacher and the student backbones are kept frozen throughout
training; the \emph{only} trainable parameters live in the {CKT module}.

\paragraph{Single adapter.}
Each adapter branch consists of a shared projection trunk
(\texttt{down\_proj}/\texttt{up\_proj} with GELU, LayerNorm and Dropout), a set of learnable query tokens, a multi-head cross-attention
block, and a post-attention LayerNorm.  Its parameter breakdown is
given in Table~\ref{tab:single-adapter}.

\begin{table}[h]
\centering
\caption{Parameter breakdown of a single adapter branch.
Dimensions: $d_{\text{tea}}{=}5120$, $d_{\text{stu}}{=}2048$,
$d_b{=}512$, $K{=}32$, cross-attention with $8$ heads.}
\label{tab:single-adapter}
\small
\begin{tabular}{lll r}
\toprule
Module & Shape & Role & \# Params \\
\midrule
\texttt{down\_proj}      & $\mathrm{Linear}(5120{\to}512)$    & teacher\,$\to$\,bottleneck     & $2{,}621{,}952$  \\
\texttt{up\_proj}        & $\mathrm{Linear}(512{\to}2048)$    & bottleneck\,$\to$\,student     & $1{,}050{,}624$  \\
\texttt{layer\_norm}     & $\mathrm{LayerNorm}(2048)$         & post-MLP norm                  & $4{,}096$        \\
\texttt{query\_tokens}   & $(1,32,2048)$                      & learnable queries              & $65{,}536$       \\
\texttt{cross\_attn}     & $\mathrm{MHA}(d{=}2048,h{=}8)$     & compression to $K$ tokens      & $16{,}785{,}408$ \\
\texttt{post\_attn\_norm}& $\mathrm{LayerNorm}(2048)$         & post-attn norm                 & $4{,}096$        \\
\midrule
\multicolumn{3}{l}{\textbf{Single adapter total}} & $\mathbf{20{,}531{,}712}$ \\
\bottomrule
\end{tabular}
\end{table}

Observation: the $Q/K/V$ and output projections of the cross-attention
block alone account for $\sim\!16.8$M parameters, i.e.\ roughly
$82\%$ of a single adapter's budget.  This is expected, since the
compressor operates in the student's feature space
($d_{\text{stu}}{=}2048$) and uses four $2048{\times}2048$ projection
matrices.

\paragraph{Full \textsc{AdapterBank}.}
The bank is composed of one always-on generalized adapter, $M{=}8$
specialized adapters that are sparsely activated by the router, and a
small two-layer gating network.  Its overall parameter count is
summarized in Table~\ref{tab:adapter-bank}.

\begin{table}[h]
\centering
\caption{Parameter breakdown of the full \textsc{AdapterBank} used in
our main LIBERO experiments.  The router is a two-layer MLP
($5120{\to}512{\to}8$) plus a single learnable noise scalar.}
\label{tab:adapter-bank}
\small
\begin{tabular}{l r r r}
\toprule
Component & \# Instances & \# Params per inst. & \# Params (total) \\
\midrule
Generalized adapter                      & $1$ & $20{,}531{,}712$ & $20{,}531{,}712$  \\
Specialized adapters                     & $8$ & $20{,}531{,}712$ & $164{,}253{,}696$ \\
Dynamic router ($5120{\to}512{\to}8$, + noise) & $1$ & $2{,}626{,}057$ & $2{,}626{,}057$  \\
\midrule
\textbf{Total trainable (\textsc{AdapterBank})} & & & $\mathbf{187{,}411{,}465}$ \\
\multicolumn{3}{l}{\emph{(i.e.\ $\approx\!187.4$M trainable parameters)}} & \\
\bottomrule
\end{tabular}
\end{table}

Specialized adapters account for roughly $88\%$ of all trainable
parameters ($164.3$M of $187.4$M), whereas the router adds only
$2.6$M.  Note that although the router selects only $k{=}2$
specialized experts per instance, top-$k$ routing affects
\emph{output composition}, not parameter count: all $M$ adapters store
their own weights and are therefore included in the total.

\paragraph{Cost relative to the frozen backbones.}
Table~\ref{tab:relative-cost} situates the CKT budget against the two
frozen WAMs.  Even with the full $M{=}8$ expert configuration, the
overhead remains below $1.2\%$ of the combined teacher--student
backbone size, justifying our characterization of CKT module as
parameter efficient.

\begin{table}[h]
\centering
\caption{CKT trainable parameter overhead relative to frozen WAM
backbones.  All percentages are with respect to the approximate
parameter counts of Cosmos-Policy--2B (student) and DreamZero--14B
(teacher).}
\label{tab:relative-cost}
\small
\begin{tabular}{l r r}
\toprule
Frozen reference                & Approx.\ size & CKT overhead \\
\midrule
Cosmos-Policy--2B (student)     & $\sim\!2.0$B  & $9.37\%$ \\
DreamZero--14B (teacher)        & $\sim\!14.0$B & $1.34\%$ \\
Teacher $+$ Student (combined)  & $\sim\!16.0$B & $1.17\%$ \\
\bottomrule
\end{tabular}
\end{table}

\paragraph{Scaling considerations.}
Two architectural knobs dominate the CKT budget:
(i)~the number of specialized experts $M$, which scales the bank
\emph{linearly} and accounts for the $164.3$M specialized-adapter
term; and (ii)~the per-adapter cross-attention block, which accounts
for $\sim\!82\%$ of a single branch.  Consequently:
\begin{itemize}
  \item Halving the number of specialized experts from $M = 8$ to
        $M = 4$ reduces the bank to $\approx 105$M trainable
        parameters; setting $M = 2$ yields $\approx 64$M.
  \item Further reductions can be obtained by lowering the number of
        attention heads or by sharing the cross-attention projection
        across experts, at a potential cost to expert specialization.
  \item The bottleneck dimension $d_b$ and the number of query tokens
        $K$ have a comparatively minor effect ($\sim\!3.6$M per
        adapter combined), so they can be tuned freely without
        strongly affecting the overall budget.
\end{itemize}
Unless stated otherwise, all ablations in the main paper share the
default configuration reported here.

\subsection{Details of Student-Side Context Injection}
\label{app:student_injection}

This appendix expands Sec.~\ref{sec:student_injection}: we (i)~specify
exactly how the augmented conditioning sequence $\tilde{E}_t$ enters the
frozen student DiT, (ii)~prove that the 3D rotary position encoding of every
future-frame visual token is invariant to the injection, (iii)~describe the
forward-hook implementation used in our codebase, and (iv)~contrast our
cross-attention K/V injection with the alternative of splicing $C_A$ into
the visual self-attention stream.

\subsubsection{How $\tilde{E}_t$ shapes future-frame visual tokens}

\label{app:cross_attn_pathway}

Each block $\ell \in \{1,\dots,L^{\mathrm{stu}}\}$ of the student DiT
implements the standard ``self-attention $\to$ cross-attention $\to$ FFN''
pattern, gated by an AdaLN modulation $e^{(\ell)}$ derived from the
diffusion timestep $\tau$:
\begin{align}
Z^{(\ell+\tfrac{1}{3})}
&= Z^{(\ell)} + \mathrm{SelfAttn}^{(\ell)}\!\bigl(\mathrm{AdaLN}_1(Z^{(\ell)},\,e^{(\ell)})\bigr),
\label{eq:self_attn_block}
\\[2pt]
Z^{(\ell+\tfrac{2}{3})}
&= Z^{(\ell+\tfrac{1}{3})} + \mathrm{CrossAttn}^{(\ell)}\!\Bigl(\,
   Q\!=\!\mathrm{AdaLN}_3\!\bigl(Z^{(\ell+\tfrac{1}{3})}\bigr),\;
   K\!=\!V\!=\!\tilde{E}_t\,\Bigr),
\label{eq:cross_attn_block}
\\[2pt]
Z^{(\ell+1)}
&= Z^{(\ell+\tfrac{2}{3})} + \mathrm{FFN}^{(\ell)}\!\bigl(\mathrm{AdaLN}_2(Z^{(\ell+\tfrac{2}{3})},\,e^{(\ell)})\bigr),
\label{eq:ffn_block}
\end{align}
where, importantly, our injection only modifies the K/V argument of the
cross-attention in (\ref{eq:cross_attn_block}); all attention heads,
projection matrices and AdaLN parameters remain frozen at their pretrained
values.

Expanding (\ref{eq:cross_attn_block}) per head $h \in \{1,\dots,H_{\mathrm{a}}\}$,
the residual delivered to a future-frame patch $z_{(t,h_p,w_p)}$ is
\begin{equation}
\Delta z_{(t,h_p,w_p)}^{(\ell)}
\;=\;
\sum_{h=1}^{H_{\mathrm{a}}} W_O^{(\ell,h)}
\sum_{j=1}^{\tilde{L}}
\alpha_{(t,h_p,w_p),\,j}^{(\ell,h)}\,
\bigl(\tilde{E}_t\,W_V^{(\ell,h)}\bigr)_j,
\label{eq:patch_update_appendix}
\end{equation}
with the attention weights
\begin{equation}
\alpha_{(t,h_p,w_p),\,j}^{(\ell,h)}
\;\propto\;
\exp\!\Bigl(
  \tfrac{1}{\sqrt{d_h}}\,
  \bigl\langle\,
    z_{(t,h_p,w_p)} W_Q^{(\ell,h)},\;
    \tilde{E}_t W_K^{(\ell,h)}\bigr|_j
  \bigr\rangle
\Bigr).
\label{eq:attn_weights_appendix}
\end{equation}

Two observations follow directly from~(\ref{eq:patch_update_appendix})--(\ref{eq:attn_weights_appendix}):
\begin{enumerate}[label=(\roman*),leftmargin=1.5em,itemsep=2pt,topsep=2pt]
  \item Information flow is strictly one-directional: $C_A$ contributes keys
        and values but never queries, so the cost of the cross-attention
        is $\mathcal{O}(L_v\,\tilde{L}\,d^{\mathrm{stu}})$ rather than the
        $\mathcal{O}(L_v^2\,d^{\mathrm{stu}})$ it would incur if $C_A$ were
        spliced into the self-attention stream.
  \item Each patch admits a personalized mixture of teacher tokens, indexed
        by its spatio-temporal coordinate; this is the mechanism by which
        a single observation-level $C_A$ can deliver location-specific
        cues to spatially distinct regions of the predicted future frames.
\end{enumerate}

\subsubsection{Position-encoding invariance}
\label{app:rope_invariance}

The student's visual tokens are equipped with a \emph{factorized 3D rotary
position encoding} (3D RoPE) applied inside the self-attention of
(\ref{eq:self_attn_block}). For a patch at index $(t,h_p,w_p)$ and head
dimension $d_h$, write
\begin{equation}
\Phi_{(t,h_p,w_p)}
\;=\;
\bigl[\,\phi_T(t)\,;\,\phi_H(h_p)\,;\,\phi_W(w_p)\,\bigr]
\;\in\; \mathbb{R}^{d_h},
\label{eq:rope_def}
\end{equation}
where $\phi_T,\phi_H,\phi_W$ are the temporal, height- and width-axis
rotation phases, allocated to disjoint slices of the head dimension. The
rotary basis is materialised as
\begin{equation}
\bigl(q^{\mathrm{rope}},\,k^{\mathrm{rope}}\bigr)_{(t,h_p,w_p)}
\;=\;
\bigl(R\bigl(\Phi_{(t,h_p,w_p)}\bigr)\,q_{(t,h_p,w_p)},\;
       R\bigl(\Phi_{(t,h_p,w_p)}\bigr)\,k_{(t,h_p,w_p)}\bigr),
\label{eq:rope_apply}
\end{equation}
with $R(\cdot)$ the standard block-diagonal rotation matrix.

\begin{proposition}[Position-encoding invariance under context injection]
\label{prop:rope_invariance}
For every block $\ell$, every patch index $(t,h_p,w_p)\in[T]\times[H]\times[W]$,
every denoising step $\tau$, and every choice of $C_A$ and
$\langle\textsc{sep}\rangle$, the rotary basis of (\ref{eq:rope_apply})
applied to that patch is unchanged when $E_t$ is replaced by $\tilde{E}_t$
in the cross-attention. Equivalently, neither the temporal index $t$, nor
the spatial indices $(h_p,w_p)$, nor the rotation matrices
$R(\Phi_{(t,h_p,w_p)})$ depend on $\tilde{L}$.
\end{proposition}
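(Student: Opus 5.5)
The plan is a dependency argument: write the rotary basis as an explicit function of its inputs and check that $\tilde{L}$ is not among them. By (\ref{eq:rope_def})--(\ref{eq:rope_apply}), the rotary basis of a patch is $R(\Phi_{(t,h_p,w_p)})$. Here $\Phi$ is a deterministic function of the integer triple $(t,h_p,w_p)$ alone, through the axis phase maps $\phi_T,\phi_H,\phi_W$. Those maps depend only on the fixed head dimension $d_h$, the frozen frequency base and the axis-slice allocation, none of which is trainable or touched by injection. So it suffices to show two things: the index triple attached to each future-frame visual token is unchanged when $E_t$ is replaced by $\tilde{E}_t$, and $R$ itself is the same map.

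First, I will fix the visual token grid. The visual stream $Z^{(0)}$ is produced by patchifying the noisy video latent $v_{n}$. Its shape $[T]\times[H]\times[W]$ and its ordering are determined by the latent dimensions $T_v$ and the spatial patch layout. Neither of these involves the conditioning sequence, since $\tilde{E}_t$ enters only as an argument of the cross-attention in (\ref{eq:cross_attn_block}). Hence the map from token position to $(t,h_p,w_p)$ is identical in both cases, and in particular it is independent of $\tilde{L}=L_t+K_g+K_s+1$.

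Second, I will locate where RoPE is applied. By the block structure (\ref{eq:self_attn_block})--(\ref{eq:ffn_block}), rotations are applied only to the queries and keys of $\mathrm{SelfAttn}^{(\ell)}$, which are computed from $\mathrm{AdaLN}_1(Z^{(\ell)},e^{(\ell)})$. In the cross-attention (\ref{eq:attn_weights_appendix}), no rotation is applied on the conditioning side, so $\tilde{L}$ only changes the length of the softmax normalization over $j$. The AdaLN embedding $e^{(\ell)}$ depends only on $\tau$. Thus, for every $\ell$ and $\tau$, the operator $R(\Phi_{(t,h_p,w_p)})$ multiplying $q_{(t,h_p,w_p)}$ and $k_{(t,h_p,w_p)}$ is literally the same matrix. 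The vectors $q$ and $k$ themselves may change from block $\ell\ge 2$ onward, because the residual stream absorbs cross-attention output, but the proposition concerns only the basis. I will state this distinction explicitly to avoid overclaiming.

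The main obstacle is purely definitional: making sure no implementation shares a positional index counter between text and visual tokens, as happens in joint-sequence MMDiT-style models. I would handle this by invoking the architecture as specified in Appendix~\ref{app:cross_attn_pathway}, where text and visual streams are separate and conditioning enters only through the K/V of cross-attention. I would then remark that the claim would fail for a splice-into-self-attention alternative.
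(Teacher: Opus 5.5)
Your proposal is correct and follows essentially the same route as the paper's proof, which likewise argues that the phases are computed from the $(T,H,W)$ patch grid alone (P1), that RoPE is applied only inside visual self-attention (P2), and that cross-attention carries no positional encoding (P3). Your caveat that the rotated vectors $q,k$ do depend on $\tilde{E}_t$ for $\ell\ge 2$ through the residual stream, while the basis $R(\Phi_{(t,h_p,w_p)})$ does not, is more accurate than the paper's (P2). That step asserts the rotation is computed on tensors independent of $\tilde{E}_t$, which holds only for the first block.
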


\begin{proof}
The proposition follows from three properties of the student backbone, as detailed in App.~\ref{app:implementation}.
\begin{enumerate}[label=(P\arabic*),leftmargin=2em,itemsep=2pt,topsep=2pt]
  \item \emph{RoPE indexing is geometry-bound, not sequence-bound.}
  The phases $\phi_T(t),\phi_H(h_p),\phi_W(w_p)$ in (\ref{eq:rope_def})
  are computed by enumerating the patch grid $(T,H,W)$ alone; they do not
  depend on the conditioning length $\tilde{L}$ or on any token in
  $\tilde{E}_t$. Hence $\Phi_{(t,h_p,w_p)}$ is identical under $E_t$ and
  $\tilde{E}_t$.

  \item \emph{RoPE is applied only inside visual self-attention.}
  In (\ref{eq:self_attn_block}), the queries and keys of the
  self-attention come from $Z^{(\ell)}$ exclusively; $\tilde{E}_t$ is not
  routed into self-attention. Therefore (\ref{eq:rope_apply}) is computed
  on tensors that are independent of $\tilde{E}_t$.

  \item \emph{Cross-attention applies no positional encoding on either side.}
  In (\ref{eq:cross_attn_block}), the cross-attention used by the student
  WAM applies no positional encoding to its queries (visual) or its keys
  and values (textual + transferred context). Consequently, the absolute
  position of $C_A$ inside $\tilde{E}_t$, and the presence of
  $\langle\textsc{sep}\rangle$, contribute only through their value
  embeddings via (\ref{eq:patch_update_appendix})--(\ref{eq:attn_weights_appendix})
  and not through any positional kernel.
\end{enumerate}
Combining (P1)--(P3) yields the proposition.
\end{proof}

\paragraph{Corollary: order-invariance of the conditioning sequence.}
By (P3), the cross-attention output in (\ref{eq:cross_attn_block}) is
invariant under any permutation of the rows of $\tilde{E}_t$. The exact
ordering chosen in (\ref{eq:inject_sparse}) -- $E_t$ first, then
$\langle\textsc{sep}\rangle$, then $C_A$ -- is therefore a stylistic choice
and does not affect the network's outputs at initialization or after
training.\footnote{In practice we still keep a fixed ordering at training
and evaluation so that the student's K/V cache layout is deterministic.}

\subsubsection{Implementation as a forward hook}
\label{app:implementation}

Because the augmentation in (\ref{eq:inject_sparse}) acts only on the
output of the student's text-projection module, it is implemented as a
single non-intrusive forward hook on $g_\theta$:
\begin{equation}
g_\theta(\cdot) \,\rightsquigarrow\, \tilde{g}_\theta(\cdot) \;:=\;
\bigl[\, g_\theta(\cdot) \,;\, \langle\textsc{sep}\rangle \,;\, C_A \,\bigr],
\end{equation}
registered before each forward pass and cleared afterwards. This means in
particular that:
\begin{itemize}[leftmargin=1.2em,itemsep=2pt,topsep=2pt]
  \item \textbf{No edits to the student source code.} The DiT module file
  is loaded verbatim from the pretrained checkpoint; the injection lives
  entirely in our pipeline wrapper.
  \item \textbf{Trainable parameters are confined to the CKT module.}
  The hook returns a concatenation, which has no learnable parameters of
  its own (apart from the optional $\langle\textsc{sep}\rangle$ token);
  gradients flow only through $C_A$ back into the CKT adapter bank and
  router.
  \item \textbf{Amortization across denoising steps.} The teacher and
  the CKT module are evaluated once per observation, the resulting
  $C_A$ is cached on the wrapper, and every subsequent call of the
  student's $g_\theta$ -- one per denoising step per CFG branch -- reads
  the same cached $C_A$. The marginal compute relative to the
  unaugmented student is one extra K/V projection over $\tilde{L}-L_t-1$
  tokens per cross-attention layer.
\end{itemize}

\subsubsection{Why not splice $C_A$ into the visual stream?}
\label{app:why_not_visual}

A natural alternative is to concatenate $C_A$ with the visual tokens in the
self-attention stream of (\ref{eq:self_attn_block}) instead of with the
conditioning tokens. We deliberately avoid this design for three reasons.
\begin{enumerate}[leftmargin=1.5em,itemsep=2pt,topsep=2pt]
  \item \textbf{It breaks the rotary basis.} The 3D RoPE in
  (\ref{eq:rope_def})--(\ref{eq:rope_apply}) requires the visual sequence
  length to equal $T H W$ exactly; appending $K_g + K_s$ extra tokens to
  $Z_\tau$ violates this invariant. Restoring it would require ad-hoc
  positional indices for the spliced tokens, all of which interact with
  the pretrained rotary basis in unpredictable ways and were observed in
  preliminary experiments to destabilize denoising.
  \item \textbf{It blows up self-attention cost.} Self-attention is
  $\mathcal{O}((L_v + 2K)^2 d^{\mathrm{stu}})$ instead of
  $\mathcal{O}(L_v^2 d^{\mathrm{stu}})$, and unlike cross-attention the
  cost cannot be hidden behind FlashAttention's K/V-asymmetric kernel.
  \item \textbf{It conflates roles.} Self-attention is the student's
  pretrained spatial-temporal mixer over visual content; pushing teacher
  tokens through this pathway would force them to behave as pseudo-patches
  competing with future-frame patches for the same attention budget.
  Cross-attention, in contrast, is the student's native conditioning
  pathway, and is exactly the role $C_A$ is meant to play.
\end{enumerate}
The injection in (\ref{eq:inject_sparse}) thus realizes the smallest
possible interface change to the frozen student that still grants every
future-frame visual token access to the teacher's transferred knowledge,
while preserving its pretrained geometry, attention budget, and inference
amortization.

\subsection{Training Dynamics of the Balancing Objective}
\label{app:training_dynamics_balancing}

\begin{figure}[t]
    \centering
    \includegraphics[width=\textwidth]{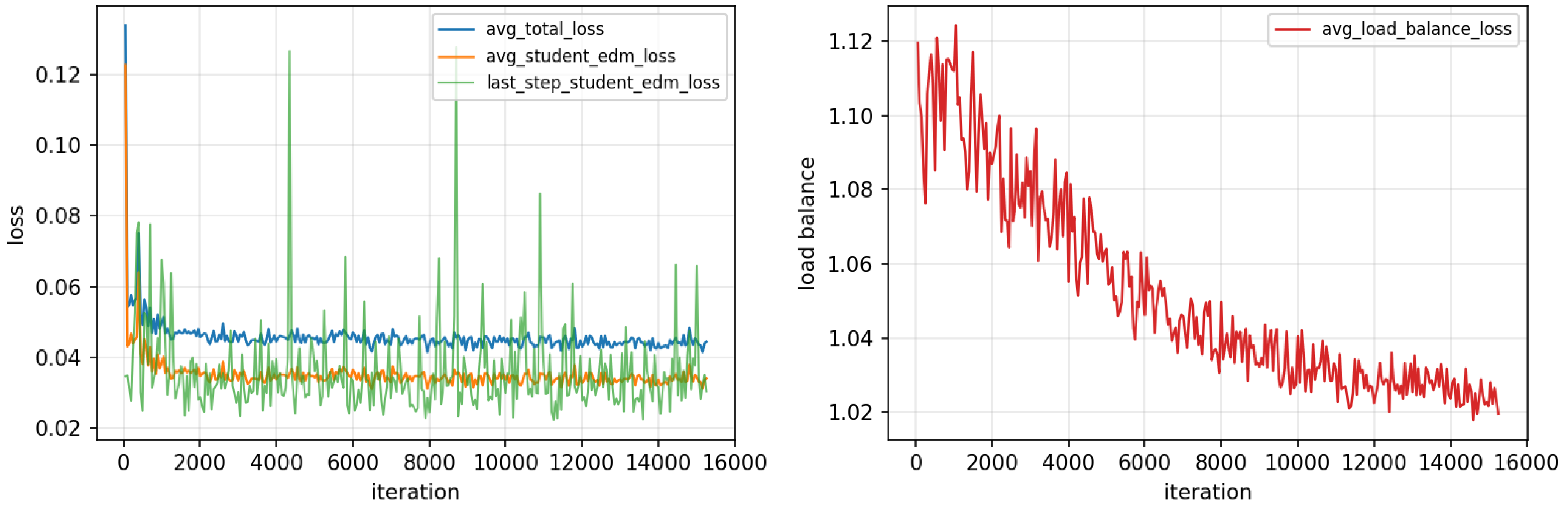}
    \caption{
    Training curves of the proposed method.
    \textbf{Left:} optimization trajectories of the total loss, the averaged student EDM loss, and the last-step student EDM loss.
    \textbf{Right:} trajectory of the load-balancing loss $\mathcal{L}_{\mathrm{bal}}$.
    The balancing term decreases steadily throughout training, while the overall objective remains stable.
    }
    \label{fig:training_dynamics_balancing}
\end{figure}

Figure~\ref{fig:training_dynamics_balancing} shows the training dynamics of the proposed objective.
Recall that the final loss is defined as
$\mathcal{L} = \mathcal{L}_{\mathrm{CKT}} + \lambda_{\mathrm{bal}} \mathcal{L}_{\mathrm{bal}}$,
where
\begin{equation}
\mathcal{L}_{\mathrm{bal}}
=
M \sum_{m=1}^{M} f_m P_m,
\end{equation}
with $\lambda_{\mathrm{bal}}=0.01$. 
This term penalizes routing imbalance by encouraging the marginal routing probabilities and actual selection frequencies to be more uniformly distributed across adapters.

Several observations can be made.
First, the optimization remains stable throughout training.
As shown in the left panel, both the total loss and the averaged student EDM loss drop rapidly in the early stage and then enter a stable plateau, indicating that the auxiliary balancing term does not destabilize the total loss.
The last-step student EDM loss exhibits visibly larger variance, which is expected since it is measured from a single denoising step and is therefore more sensitive to minibatch variation and timestep-dependent difficulty.
Importantly, despite such local fluctuations, its overall scale remains bounded and does not show progressive drift, suggesting that training is well behaved.

Second, the balancing objective is consistently optimized and improves routing utilization over time.
In the right panel, $\mathcal{L}_{\mathrm{bal}}$ steadily decreases from roughly $1.12$ at the beginning of training to about $1.02$ near convergence.
This trend indicates that adapter usage becomes increasingly balanced as training proceeds, rather than collapsing to a small subset of adapters.
Notably, under perfectly balanced routing, one would have $P_m = 1/M$ and $f_m = 1/M$ for all $m$, which yields
\begin{equation}
\mathcal{L}_{\mathrm{bal}} = 1.
\end{equation}
Therefore, the final value close to $1.0$ suggests that the learned router approaches a near-uniform operating regime while still preserving enough flexibility for input-dependent specialization.

Third, the gap between the total loss and the student EDM loss remains small and stable, which further indicates that the balancing regularizer acts as a mild auxiliary constraint rather than dominating optimization.
This behavior is desirable: the model continues to reduce the primary generation error, while the routing structure is simultaneously regularized toward healthier adapter utilization.
Taken together, these curves support that the proposed balancing formulation achieves its intended effect---it improves adapter allocation without introducing noticeable optimization conflict with the main training objective.

Overall, Figure~\ref{fig:training_dynamics_balancing} provides empirical evidence that the load-balancing term is both \emph{effective} and \emph{non-disruptive}: it steadily mitigates routing imbalance, drives the routing statistics toward the balanced regime, and preserves stable optimization of the student objective throughout training.

\subsection{Results on LIBERO Benchmark}


\begin{table}[t]
\centering
\caption{LIBERO simulation benchmark results. Success rates (SR, \%) across four LIBERO benchmark task suites \citep{libero}.}
\label{tab:libero_sim}
\begin{tabular}{lccccc}
\toprule
Method & Spatial & Object & Goal & Long & Average \\
\midrule
Diffusion Policy  & 78.3 & 92.5 & 68.3 & 50.5 & 72.4 \\
Dita  & 97.4 & 94.8 & 93.2 & 83.6 & 92.3 \\
$\pi_0$ & 96.8 & 98.8 & 95.8 & 85.2 & 94.2 \\
UniVLA & 96.5 & 96.8 & 95.6 & 92.0 & 95.2 \\
$\pi_{0.5}$ & 98.8 & 98.2 & 98.0 & 92.4 & 96.9 \\
OpenVLA-OFT & 97.6 & 98.4 & 97.9 & 94.5 & 97.1 \\
CogVLA & 98.6 & 98.8 & 96.6 & 95.4 & 97.4 \\
Cosmos Policy & 98.1 & \textbf{100.0} & 98.2 & 97.6 & 98.5 \\
\midrule
CKT-WAM (Ours) & \textbf{99.0} & 99.8 & \textbf{98.6} & \textbf{97.8} & \textbf{98.8} \\
\bottomrule
\end{tabular}
\end{table}

Table~\ref{tab:libero_sim} reports the results on the standard LIBERO simulation benchmark.
The student WAM is initialized from the official Cosmos-Policy-LIBERO-Predict2-2B checkpoint.
CKT-WAM achieves the best overall performance, reaching an average success rate of 98.8\%, which improves over the strongest prior baseline, Cosmos Policy, by 0.3 points. Although the benchmark is already highly saturated for top-performing methods, CKT-WAM still delivers consistent gains on three out of four suites, including Spatial, Goal, and Long, while remaining competitive on Object. In particular, the improvement on the Long suite suggests that the proposed cross-model knowledge transfer is especially beneficial for temporally extended manipulation tasks, where reliable action generation over longer horizons is more critical. Overall, these results show that CKT-WAM not only preserves the strong in-domain capabilities of modern VLA policies, but also provides a more effective policy representation that translates into stronger and more consistent performance across diverse manipulation scenarios.

\section{Detailed Implementation of the Load Balancing Auxiliary Loss}
\label{app:load_balancing}

To encourage balanced utilization of specialized adapters and avoid routing collapse, we introduce an auxiliary load-balancing objective following the standard practice in sparse mixture-of-experts (MoE) models. Without such a regularizer, the router may over-concentrate probability mass on a small subset of adapters, causing under-utilization of the remaining capacity and limiting the effectiveness of specialization.

Let $p_{b,m}$ denote the routing probability assigned to the $m$-th adapter for the $b$-th instance in a mini-batch of size $B$, where $m \in \{1,\dots,M\}$ and $M$ is the total number of specialized adapters. We first define the batch-averaged routing mass of adapter $m$ as
\begin{equation}
P_m = \frac{1}{B}\sum_{b=1}^{B} p_{b,m},
\label{eq:Pm_appendix}
\end{equation}
which measures how much routing probability, on average, is allocated to adapter $m$ over the batch.

Since our routing is implemented in a sparse top-$k$ manner, each instance activates only a small subset of adapters. For the $b$-th instance, we denote by $\mathcal{I}_b = \operatorname{Top\text{-}K}(\{p_{b,m}\}_{m=1}^{M}, k)$ the index set of the $k$ adapters with the largest routing probabilities. Based on this selection set, we define the empirical top-$k$ selection frequency of adapter $m$ as
\begin{equation}
f_m = \frac{1}{kB}\sum_{b=1}^{B} \mathbb{I}\{m \in \mathcal{I}_b\},
\label{eq:fm_appendix}
\end{equation}
where $\mathbb{I}\{\cdot\}$ is the indicator function. Intuitively, $f_m$ reflects how often adapter $m$ is actually selected for execution, while $P_m$ captures how much routing probability mass it receives before hard top-$k$ selection.

Following sparse MoE formulations, we define the load-balancing loss as
\begin{equation}
\mathcal{L}_{\mathrm{bal}} = M \sum_{m=1}^{M} f_m P_m.
\label{eq:lbal_appendix}
\end{equation}
This objective is minimized when routing mass and selection frequency are distributed more evenly across adapters. In other words, it discourages degenerate solutions in which a few adapters dominate both the router probabilities and the actual top-$k$ assignments. The multiplicative factor $M$ is introduced only for scale normalization, so that the magnitude of the auxiliary term remains stable as the number of adapters varies.
The final training objective is
\begin{equation}
\mathcal{L} = \mathcal{L}_{\mathrm{CKT}} + \lambda_{\mathrm{bal}} \mathcal{L}_{\mathrm{bal}},
\label{eq:loss_total_appendix}
\end{equation}
where $\mathcal{L}_{\mathrm{CKT}}$ is the main training objective and $\lambda_{\mathrm{bal}}$ controls the strength of the auxiliary regularization. In practice, this term improves adapter utilization and stabilizes specialization by preventing the router from collapsing to a narrow subset of experts, while still allowing different adapters to develop distinct functional roles.

\section{Detailed Related Work}
\label{related_work2}

\subsection{World Action Models}
Early world-model research focused on learning compact latent dynamics for prediction, planning, and imagination-based control \citep{ha2018worldmodels, hafner2019planet, hafner2020dreamer, hafner2023dreamerv3}.

Building upon these foundations, recent literature has formally established WAMs as a powerful alternative for embodied control, explicitly modeling how visual observations may evolve under action 
\citep{ye2026world}. Within this rapidly evolving landscape, two dominant generative paradigms have emerged. 
The first is the \emph{imagine-then-execute} (or causal) approach, where models such as LingBot-VA \citep{li2026causal} and Vidar \citep{feng2025vidar} first generate future visual trajectories and subsequently condition action prediction on these imagined futures. 
The second paradigm involves \emph{joint modeling}, where future video and action tokens are denoised together within a shared generative architecture, as demonstrated by Motus \citep{bi2025motus} and other unified frameworks
\citep{zhu2025unified}.

Standard VLA models, while successful in scaling reactive visual-motor policies \citep{brohan2023rt1, zitkovich2023rt2, kim2025openvla}, largely rely on static image-text pretraining and lack this explicit physical dynamics modeling. 
However, the explicit video synthesis utilized by most WAMs incurs substantial test-time latency due to iterative denoising. 
To mitigate this efficiency bottleneck, parallel efforts such as VPP \citep{hu2024video} and UVA \citep{li2025unified} have begun exploring ways to utilize predictive representations while reducing or entirely bypassing explicit video decoding during inference.

Our work is complementary to these architectural advances. Instead of proposing a new monolithic WAM, we study how action-relevant dynamics knowledge can be efficiently transferred \emph{between} WAMs. 
In particular, CKT-WAM focuses on a practical teacher--student setting, where a stronger teacher WAM provides compact intermediate context to a more lightweight and efficient student WAM.

\subsection{Knowledge Transfer}
Knowledge transfer in neural networks has evolved from output-space teacher--student learning \citep{hinton2015distilling} to richer forms of intermediate supervision. 
FitNets introduced the idea of transferring hidden-layer hints \citep{romero2015fitnets}, while attention transfer showed that structured attention maps can serve as an effective transfer target \citep{zagoruyko2017attention}. 
In reinforcement learning, policy distillation demonstrated that action policies can be transferred from a teacher to a student network \citep{rusu2016policy}. 
For Transformer-based models, works such as TinyBERT further explored multi-level transfer over hidden states and attention patterns \citep{jiao2020tinybert}.

Despite this progress, most prior transfer methods operate by matching teacher and student outputs, hidden representations, or attention statistics directly. 
Such formulations are often most natural when the teacher and student share closely aligned prediction spaces or architectures. 
By contrast, WAM-to-WAM knowledge transfer introduces additional challenges, since different models may use different latent parameterizations, token interfaces, or denoising backbones. 

CKT-WAM therefore adopts a different perspective. 
Rather than treating transfer as direct output imitation or full representation matching, we cast it as \emph{contextual knowledge transfer}: the teacher provides a compact set of transferable context tokens extracted from an intermediate block, and the student consumes these tokens through its native conditioning pathway. 
This design preserves architectural flexibility, reduces transfer overhead, and is naturally suited to heterogeneous teacher--student WAM pairs.

\subsection{Parameter Efficient Adapters}
Parameter-efficient fine-tuning (PEFT) has emerged as a practical alternative to full fine-tuning when adapting large pretrained backbones to downstream tasks. A representative line is low-rank reparameterization, where LoRA models task-specific updates in a low-dimensional subspace with substantially reduced trainable parameters \citep{hu2022lora}. Building on this idea, recent spectral variants such as PiSSA and KaSA show that the choice of singular components and initialization can substantially affect optimization efficiency and knowledge preservation \citep{meng2024pissa,wang2025kasa}. Orthogonally, MoE-style PEFT methods further increase adaptation capacity through routed low-rank experts without reverting to full parameter updates, as exemplified by AdaMoLE and HydraLoRA \citep{liu2024adamole,tian2024hydralora}. In vision, however, effective adaptation depends not only on parameter count but also on whether the adapter design matches visual inductive biases: AdaptFormer demonstrates that lightweight bottleneck adapters can effectively adapt frozen vision transformers across image and video recognition tasks \citep{chen2022adaptformer}, ViT-Adapter injects task-relevant visual priors to make plain ViTs competitive for dense prediction \citep{chen2023vitadapter}, and the recent Mona argues that visual adaptation should move beyond NLP-style linear adapters by introducing multi-cognitive visual filters, showing that lightweight adapter tuning can even outperform full fine-tuning on several visual recognition benchmarks \citep{yin2025mona}. Our method is related to this line of work, but differs in emphasizing a structured parameter-efficient adaptation design tailored to balancing general knowledge retention and task-specific specialization.


\newpage
\input{checklist.tex}

\end{document}

%% file: checklist.tex
\section*{NeurIPS Paper Checklist}
\begin{enumerate}

\item {\bf Claims}
    \item[] Question: Do the main claims made in the abstract and introduction accurately reflect the paper's contributions and scope?
    \item[] Answer: \answerYes{} 
    \item[] Justification: The main claims made in the abstract and introduction accurately reflect the paper's contributions and scope.
    \item[] Guidelines:
    \begin{itemize}
        \item The answer \answerNA{} means that the abstract and introduction do not include the claims made in the paper.
        \item The abstract and/or introduction should clearly state the claims made, including the contributions made in the paper and important assumptions and limitations. A \answerNo{} or \answerNA{} answer to this question will not be perceived well by the reviewers. 
        \item The claims made should match theoretical and experimental results, and reflect how much the results can be expected to generalize to other settings. 
        \item It is fine to include aspirational goals as motivation as long as it is clear that these goals are not attained by the paper. 
    \end{itemize}

\item {\bf Limitations}
    \item[] Question: Does the paper discuss the limitations of the work performed by the authors?
    \item[] Answer: \answerYes{} 
    \item[] Justification: Limitations are discussed in the 5th section.
    \item[] Guidelines:
    \begin{itemize}
        \item The answer \answerNA{} means that the paper has no limitation while the answer \answerNo{} means that the paper has limitations, but those are not discussed in the paper. 
        \item The authors are encouraged to create a separate ``Limitations'' section in their paper.
        \item The paper should point out any strong assumptions and how robust the results are to violations of these assumptions (e.g., independence assumptions, noiseless settings, model well-specification, asymptotic approximations only holding locally). The authors should reflect on how these assumptions might be violated in practice and what the implications would be.
        \item The authors should reflect on the scope of the claims made, e.g., if the approach was only tested on a few datasets or with a few runs. In general, empirical results often depend on implicit assumptions, which should be articulated.
        \item The authors should reflect on the factors that influence the performance of the approach. For example, a facial recognition algorithm may perform poorly when image resolution is low or images are taken in low lighting. Or a speech-to-text system might not be used reliably to provide closed captions for online lectures because it fails to handle technical jargon.
        \item The authors should discuss the computational efficiency of the proposed algorithms and how they scale with dataset size.
        \item If applicable, the authors should discuss possible limitations of their approach to address problems of privacy and fairness.
        \item While the authors might fear that complete honesty about limitations might be used by reviewers as grounds for rejection, a worse outcome might be that reviewers discover limitations that aren't acknowledged in the paper. The authors should use their best judgment and recognize that individual actions in favor of transparency play an important role in developing norms that preserve the integrity of the community. Reviewers will be specifically instructed to not penalize honesty concerning limitations.
    \end{itemize}

\item {\bf Theory assumptions and proofs}
    \item[] Question: For each theoretical result, does the paper provide the full set of assumptions and a complete (and correct) proof?
    \item[] Answer: \answerYes{} 
    \item[] Justification: Appendix contains Proposition 1 and its corresponding formal proof regarding position-encoding invariance.
    \item[] Guidelines:
    \begin{itemize}
        \item The answer \answerNA{} means that the paper does not include theoretical results. 
        \item All the theorems, formulas, and proofs in the paper should be numbered and cross-referenced.
        \item All assumptions should be clearly stated or referenced in the statement of any theorems.
        \item The proofs can either appear in the main paper or the supplemental material, but if they appear in the supplemental material, the authors are encouraged to provide a short proof sketch to provide intuition. 
        \item Inversely, any informal proof provided in the core of the paper should be complemented by formal proofs provided in appendix or supplemental material.
        \item Theorems and Lemmas that the proof relies upon should be properly referenced. 
    \end{itemize}

    \item {\bf Experimental result reproducibility}
    \item[] Question: Does the paper fully disclose all the information needed to reproduce the main experimental results of the paper to the extent that it affects the main claims and/or conclusions of the paper (regardless of whether the code and data are provided or not)?
    \item[] Answer: \answerYes{} 
    \item[] Justification: All the information needed to reproduce the main experimental results of the paper is shown in the 3rd and 4th sections.
    \item[] Guidelines:
    \begin{itemize}
        \item The answer \answerNA{} means that the paper does not include experiments.
        \item If the paper includes experiments, a \answerNo{} answer to this question will not be perceived well by the reviewers: Making the paper reproducible is important, regardless of whether the code and data are provided or not.
        \item If the contribution is a dataset and\slash or model, the authors should describe the steps taken to make their results reproducible or verifiable. 
        \item Depending on the contribution, reproducibility can be accomplished in various ways. For example, if the contribution is a novel architecture, describing the architecture fully might suffice, or if the contribution is a specific model and empirical evaluation, it may be necessary to either make it possible for others to replicate the model with the same dataset, or provide access to the model. In general. releasing code and data is often one good way to accomplish this, but reproducibility can also be provided via detailed instructions for how to replicate the results, access to a hosted model (e.g., in the case of a large language model), releasing of a model checkpoint, or other means that are appropriate to the research performed.
        \item While NeurIPS does not require releasing code, the conference does require all submissions to provide some reasonable avenue for reproducibility, which may depend on the nature of the contribution. For example
        \begin{enumerate}
            \item If the contribution is primarily a new algorithm, the paper should make it clear how to reproduce that algorithm.
            \item If the contribution is primarily a new model architecture, the paper should describe the architecture clearly and fully.
            \item If the contribution is a new model (e.g., a large language model), then there should either be a way to access this model for reproducing the results or a way to reproduce the model (e.g., with an open-source dataset or instructions for how to construct the dataset).
            \item We recognize that reproducibility may be tricky in some cases, in which case authors are welcome to describe the particular way they provide for reproducibility. In the case of closed-source models, it may be that access to the model is limited in some way (e.g., to registered users), but it should be possible for other researchers to have some path to reproducing or verifying the results.
        \end{enumerate}
    \end{itemize}

\item {\bf Open access to data and code}
    \item[] Question: Does the paper provide open access to the data and code, with sufficient instructions to faithfully reproduce the main experimental results, as described in supplemental material?
    \item[] Answer: \answerYes{} 
    \item[] Justification: We provide open access to the data and code in the Reproducibility Statement Section
    \item[] Guidelines:
    \begin{itemize}
        \item The answer \answerNA{} means that paper does not include experiments requiring code.
        \item Please see the NeurIPS code and data submission guidelines (\url{https://neurips.cc/public/guides/CodeSubmissionPolicy}) for more details.
        \item While we encourage the release of code and data, we understand that this might not be possible, so \answerNo{} is an acceptable answer. Papers cannot be rejected simply for not including code, unless this is central to the contribution (e.g., for a new open-source benchmark).
        \item The instructions should contain the exact command and environment needed to run to reproduce the results. See the NeurIPS code and data submission guidelines (\url{https://neurips.cc/public/guides/CodeSubmissionPolicy}) for more details.
        \item The authors should provide instructions on data access and preparation, including how to access the raw data, preprocessed data, intermediate data, and generated data, etc.
        \item The authors should provide scripts to reproduce all experimental results for the new proposed method and baselines. If only a subset of experiments are reproducible, they should state which ones are omitted from the script and why.
        \item At submission time, to preserve anonymity, the authors should release anonymized versions (if applicable).
        \item Providing as much information as possible in supplemental material (appended to the paper) is recommended, but including URLs to data and code is permitted.
    \end{itemize}

\item {\bf Experimental setting/details}
    \item[] Question: Does the paper specify all the training and test details (e.g., data splits, hyperparameters, how they were chosen, type of optimizer) necessary to understand the results?
    \item[] Answer: \answerYes{} 
    \item[] Justification: All the training and test details are shown in the Experiments section.
    \item[] Guidelines:
    \begin{itemize}
        \item The answer \answerNA{} means that the paper does not include experiments.
        \item The experimental setting should be presented in the core of the paper to a level of detail that is necessary to appreciate the results and make sense of them.
        \item The full details can be provided either with the code, in appendix, or as supplemental material.
    \end{itemize}

\item {\bf Experiment statistical significance}
    \item[] Question: Does the paper report error bars suitably and correctly defined or other appropriate information about the statistical significance of the experiments?
    \item[] Answer: \answerYes{} 
    \item[] Justification: We report the number of trials in real-world experiments.
    \item[] Guidelines:
    \begin{itemize}
        \item The answer \answerNA{} means that the paper does not include experiments.
        \item The authors should answer \answerYes{} if the results are accompanied by error bars, confidence intervals, or statistical significance tests, at least for the experiments that support the main claims of the paper.
        \item The factors of variability that the error bars are capturing should be clearly stated (for example, train/test split, initialization, random drawing of some parameter, or overall run with given experimental conditions).
        \item The method for calculating the error bars should be explained (closed form formula, call to a library function, bootstrap, etc.)
        \item The assumptions made should be given (e.g., Normally distributed errors).
        \item It should be clear whether the error bar is the standard deviation or the standard error of the mean.
        \item It is OK to report 1-sigma error bars, but one should state it. The authors should preferably report a 2-sigma error bar than state that they have a 96\% CI, if the hypothesis of Normality of errors is not verified.
        \item For asymmetric distributions, the authors should be careful not to show in tables or figures symmetric error bars that would yield results that are out of range (e.g., negative error rates).
        \item If error bars are reported in tables or plots, the authors should explain in the text how they were calculated and reference the corresponding figures or tables in the text.
    \end{itemize}

\item {\bf Experiments compute resources}
    \item[] Question: For each experiment, does the paper provide sufficient information on the computer resources (type of compute workers, memory, time of execution) needed to reproduce the experiments?
    \item[] Answer: \answerYes{} 
    \item[] Justification: The paper provides sufficient information on the computer resources in the Experiments Section.
    \item[] Guidelines:
    \begin{itemize}
        \item The answer \answerNA{} means that the paper does not include experiments.
        \item The paper should indicate the type of compute workers CPU or GPU, internal cluster, or cloud provider, including relevant memory and storage.
        \item The paper should provide the amount of compute required for each of the individual experimental runs as well as estimate the total compute. 
        \item The paper should disclose whether the full research project required more compute than the experiments reported in the paper (e.g., preliminary or failed experiments that didn't make it into the paper). 
    \end{itemize}
    
\item {\bf Code of ethics}
    \item[] Question: Does the research conducted in the paper conform, in every respect, with the NeurIPS Code of Ethics \url{https://neurips.cc/public/EthicsGuidelines}?
    \item[] Answer: \answerYes{} 
    \item[] Justification: The research conducted in the paper conform, in every respect, with the NeurIPS Code of Ethics.
    \item[] Guidelines:
    \begin{itemize}
        \item The answer \answerNA{} means that the authors have not reviewed the NeurIPS Code of Ethics.
        \item If the authors answer \answerNo, they should explain the special circumstances that require a deviation from the Code of Ethics.
        \item The authors should make sure to preserve anonymity (e.g., if there is a special consideration due to laws or regulations in their jurisdiction).
    \end{itemize}

\item {\bf Broader impacts}
    \item[] Question: Does the paper discuss both potential positive societal impacts and negative societal impacts of the work performed?
    \item[] Answer: \answerYes{} 
    \item[] Justification: the paper discusses both potential positive societal impacts and negative societal impacts of the work performed.
    \item[] Guidelines:
    \begin{itemize}
        \item The answer \answerNA{} means that there is no societal impact of the work performed.
        \item If the authors answer \answerNA{} or \answerNo, they should explain why their work has no societal impact or why the paper does not address societal impact.
        \item Examples of negative societal impacts include potential malicious or unintended uses (e.g., disinformation, generating fake profiles, surveillance), fairness considerations (e.g., deployment of technologies that could make decisions that unfairly impact specific groups), privacy considerations, and security considerations.
        \item The conference expects that many papers will be foundational research and not tied to particular applications, let alone deployments. However, if there is a direct path to any negative applications, the authors should point it out. For example, it is legitimate to point out that an improvement in the quality of generative models could be used to generate Deepfakes for disinformation. On the other hand, it is not needed to point out that a generic algorithm for optimizing neural networks could enable people to train models that generate Deepfakes faster.
        \item The authors should consider possible harms that could arise when the technology is being used as intended and functioning correctly, harms that could arise when the technology is being used as intended but gives incorrect results, and harms following from (intentional or unintentional) misuse of the technology.
        \item If there are negative societal impacts, the authors could also discuss possible mitigation strategies (e.g., gated release of models, providing defenses in addition to attacks, mechanisms for monitoring misuse, mechanisms to monitor how a system learns from feedback over time, improving the efficiency and accessibility of ML).
    \end{itemize}
    
\item {\bf Safeguards}
    \item[] Question: Does the paper describe safeguards that have been put in place for responsible release of data or models that have a high risk for misuse (e.g., pre-trained language models, image generators, or scraped datasets)?
    \item[] Answer: \answerNA{} 
    \item[] Justification: the paper poses no such risks.
    \item[] Guidelines:
    \begin{itemize}
        \item The answer \answerNA{} means that the paper poses no such risks.
        \item Released models that have a high risk for misuse or dual-use should be released with necessary safeguards to allow for controlled use of the model, for example by requiring that users adhere to usage guidelines or restrictions to access the model or implementing safety filters. 
        \item Datasets that have been scraped from the Internet could pose safety risks. The authors should describe how they avoided releasing unsafe images.
        \item We recognize that providing effective safeguards is challenging, and many papers do not require this, but we encourage authors to take this into account and make a best faith effort.
    \end{itemize}

\item {\bf Licenses for existing assets}
    \item[] Question: Are the creators or original owners of assets (e.g., code, data, models), used in the paper, properly credited and are the license and terms of use explicitly mentioned and properly respected?
    \item[] Answer: \answerYes{} 
    \item[] Justification: the creators and original owners of assets are properly credited, and the license and terms of use are explicitly mentioned and are properly respected.
    \item[] Guidelines:
    \begin{itemize}
        \item The answer \answerNA{} means that the paper does not use existing assets.
        \item The authors should cite the original paper that produced the code package or dataset.
        \item The authors should state which version of the asset is used and, if possible, include a URL.
        \item The name of the license (e.g., CC-BY 4.0) should be included for each asset.
        \item For scraped data from a particular source (e.g., website), the copyright and terms of service of that source should be provided.
        \item If assets are released, the license, copyright information, and terms of use in the package should be provided. For popular datasets, \url{paperswithcode.com/datasets} has curated licenses for some datasets. Their licensing guide can help determine the license of a dataset.
        \item For existing datasets that are re-packaged, both the original license and the license of the derived asset (if it has changed) should be provided.
        \item If this information is not available online, the authors are encouraged to reach out to the asset's creators.
    \end{itemize}

\item {\bf New assets}
    \item[] Question: Are new assets introduced in the paper well documented and is the documentation provided alongside the assets?
    \item[] Answer: \answerYes{} 
    \item[] Justification: Codes introduced in the paper are well documented in the anonymous code link.
    \item[] Guidelines:
    \begin{itemize}
        \item The answer \answerNA{} means that the paper does not release new assets.
        \item Researchers should communicate the details of the dataset\slash code\slash model as part of their submissions via structured templates. This includes details about training, license, limitations, etc. 
        \item The paper should discuss whether and how consent was obtained from people whose asset is used.
        \item At submission time, remember to anonymize your assets (if applicable). You can either create an anonymized URL or include an anonymized zip file.
    \end{itemize}

\item {\bf Crowdsourcing and research with human subjects}
    \item[] Question: For crowdsourcing experiments and research with human subjects, does the paper include the full text of instructions given to participants and screenshots, if applicable, as well as details about compensation (if any)? 
    \item[] Answer: \answerNA{} 
    \item[] Justification: the paper does not involve crowdsourcing nor research with human subjects.
    \item[] Guidelines:
    \begin{itemize}
        \item The answer \answerNA{} means that the paper does not involve crowdsourcing nor research with human subjects.
        \item Including this information in the supplemental material is fine, but if the main contribution of the paper involves human subjects, then as much detail as possible should be included in the main paper. 
        \item According to the NeurIPS Code of Ethics, workers involved in data collection, curation, or other labor should be paid at least the minimum wage in the country of the data collector. 
    \end{itemize}

\item {\bf Institutional review board (IRB) approvals or equivalent for research with human subjects}
    \item[] Question: Does the paper describe potential risks incurred by study participants, whether such risks were disclosed to the subjects, and whether Institutional Review Board (IRB) approvals (or an equivalent approval/review based on the requirements of your country or institution) were obtained?
    \item[] Answer: \answerNA{} 
    \item[] Justification: the paper does not involve crowdsourcing nor research with human subjects.
    \item[] Guidelines:
    \begin{itemize}
        \item The answer \answerNA{} means that the paper does not involve crowdsourcing nor research with human subjects.
        \item Depending on the country in which research is conducted, IRB approval (or equivalent) may be required for any human subjects research. If you obtained IRB approval, you should clearly state this in the paper. 
        \item We recognize that the procedures for this may vary significantly between institutions and locations, and we expect authors to adhere to the NeurIPS Code of Ethics and the guidelines for their institution. 
        \item For initial submissions, do not include any information that would break anonymity (if applicable), such as the institution conducting the review.
    \end{itemize}

\item {\bf Declaration of LLM usage}
    \item[] Question: Does the paper describe the usage of LLMs if it is an important, original, or non-standard component of the core methods in this research? Note that if the LLM is used only for writing, editing, or formatting purposes and does \emph{not} impact the core methodology, scientific rigor, or originality of the research, declaration is not required.
    \item[] Answer: \answerYes{} 
    \item[] Justification: the paper describes the usage of LLMs in the appendix.
    \item[] Guidelines:
    \begin{itemize}
        \item The answer \answerNA{} means that the core method development in this research does not involve LLMs as any important, original, or non-standard components.
        \item Please refer to our LLM policy in the NeurIPS handbook for what should or should not be described.
    \end{itemize}

\end{enumerate}